\theoremstyle{plain}
\newtheorem{theorem}{Theorem}[section]
\newtheorem{proposition}[theorem]{Proposition}
\newtheorem{corollary}[theorem]{Corollary}
\theoremstyle{definition}
\newtheorem{definition}[theorem]{Definition}
\theoremstyle{remark}
\newtheorem{remark}[theorem]{Remark}
\newcommand{\R}{\mathbb{R}}
\newcommand{\gG}{\mathcal{G}} 
\newcommand{\F}{\mathcal{F}}  
\newcommand{\rednote}[1]{{\color{black}#1}}
\icmltitlerunning{\textsc{Platonic Transformers}: A Solid Choice For Equivariance}
\definecolor{mycustomcolor}{HTML}{00b894} 
\begin{document}

\twocolumn[
  \icmltitle{\textbf{Platonic Transformers}: A Solid Choice For Equivariance}


  \icmlsetsymbol{equal}{*}
  \icmlsetsymbol{jointsenior}{$\dagger$}

  \begin{icmlauthorlist}
    \icmlauthor{Mohammad Mohaiminul Islam}{qurai}
    \icmlauthor{Rishabh Anand}{equal,yale}
    \icmlauthor{David R. Wessels}{equal,amlab}
    \icmlauthor{Friso de Kruiff}{amlab}
    \icmlauthor{Thijs P. Kuipers}{bmep}
    \icmlauthor{Rex Ying}{yale}
    \icmlauthor{Clara I. S\'{a}nchez}{qurai,bmep}
    \icmlauthor{Sharvaree Vadgama}{jointsenior,amlab}
    \icmlauthor{Georg B\"{o}kman}{jointsenior,amlab}
    \icmlauthor{Erik J. Bekkers}{amlab,nt}
  \end{icmlauthorlist}

  \icmlaffiliation{qurai}{QurAI, University of Amsterdam, Netherlands}
  \icmlaffiliation{yale}{Yale University, USA}
  \icmlaffiliation{bmep}{BMEP, Amsterdam UMC, Netherlands}
  \icmlaffiliation{amlab}{AMLab, University of Amsterdam, Netherlands}
  \icmlaffiliation{nt}{New Theory}

  \icmlcorrespondingauthor{Mohammad Mohaiminul Islam}{niazoys94@gmail.com}

  \icmlkeywords{Machine Learning, ICML, Equivariance, Transformers}
\vspace{0.1in}
  \centerline{Open-source code: \href{https://github.com/niazoys/PlatonicTransformers}{\texttt{github.com/niazoys/PlatonicTransformers}}}
  \vskip 0.4in
]



\printAffiliationsAndNotice{}  

\begin{abstract}
  While widespread, Transformers lack inductive biases for geometric symmetries common in science and computer vision. Existing equivariant methods often sacrifice the efficiency and flexibility that make Transformers so effective through complex, computationally intensive designs. We introduce the \textbf{Platonic Transformer} to resolve this trade-off. By defining attention relative to reference frames from the Platonic solid symmetry groups, our method induces a principled weight-sharing scheme. This enables combined equivariance to continuous translations and Platonic symmetries, while preserving the exact architecture and computational cost of a standard Transformer. Furthermore, we show that this attention is formally equivalent to a dynamic group convolution, which reveals that the model learns adaptive geometric filters and enables a \textit{highly scalable, linear-time convolutional variant}. Across diverse benchmarks in computer vision (CIFAR-10), 3D point clouds (ScanObjectNN), and molecular dynamics, property prediction and generation (OMol25, ProteinMD, QM9), the Platonic Transformer achieves competitive performance by leveraging these geometric constraints at no additional cost.
\end{abstract}

\section{Introduction}\label{sec:introduction}
Transformers \citep{vaswani2017attention} have become widespread in deep learning, demonstrating unprecedented success on a massive scale \citep{dosovitskiy2021image,jumper2021highly,devlin2019bert}. Their power lies in simple, general-purpose mechanisms that have matured over the years and continue to offer remarkable gains in speed and flexibility, benefiting from vast datasets and computational resources. Yet, this very generality implies they are not inherently equipped to handle specific symmetries present in many scientific domains. For problems with geometric structure, such as those in physics, molecular chemistry, and 3D computer vision, performance can be significantly enhanced by incorporating such inductive bias \citep{fuchs2020se3,ying2021graphormer,zhao2021point, bekkers2024fast, balla2024cosmicscalebenchmarksymmetrypreservingdata,liao2024equiformerv2improvedequivarianttransformer, romero2021group, wessels2024grounding, bose2024se3stochasticflowmatchingprotein, Zhdanov2024CliffordSteerableCN, nyholm2025equivariantnonlinearmapsneural}. The principle of symmetry, for example, has given rise to highly data-efficient and robust group equivariant networks \citep{cohen2016group,cohen2017steerable,cesa2022program}. However, scaling these symmetry-aware networks has been difficult, as their reliance on operations like group convolutions or Clebsch-Gordan tensor products introduces significant computational overhead compared to standard architectures \citep{he2021e4net,luo2024gaunt}. This raises the question: \emph{how can we leverage powerful geometric inductive biases within the transformer architecture without sacrificing the speed and flexibility integral to its success?}

A central challenge in addressing this problem lies in designing an attention mechanism that inherently respects geometric transformations. Such a mechanism would expand on the inductive bias of Transformers, which is typically limited to position embeddings. While widely used, absolute positional encodings provide location information, but they enforce no explicit relational structure \citep{shaw2018relative,he2021deberta}. A significant step towards this goal has been the adoption of Rotary Position Embeddings (RoPE) \citep{su2024roformer}, which endows attention with translation equivariance. Yet, extending this to roto-translation equivariance within the standard Transformer framework remains challenging. Existing approaches often achieve this by making complex architectural changes to equivariant networks that poorly scale or settle for invariant attention mechanisms which sacrifice feature representations for simplicity and computational efficiency \citep{masters2022gps++,assaad2023vntransformer, tholke2022equivariant, brehmer2023geometricalgebratransformer, kundu2025steerable, joshi2025adit}. Recent efforts have also explored \textit{hybrid architectures} that resort to symmetry breaking \citep{qu2024importancescalableimprovingspeed, lawrence2025improvingequivariantnetworksprobabilistic} to improve scalability but require a careful mix of modules to maximize downstream performance.

Our main contribution is the \textit{Platonic Transformer}, a framework that achieves equivariance to continuous translations and discrete roto-reflections in Transformers \textit{without changing the underlying attention mechanism or computation graph}. To achieve this, our method processes features relative to a collection of reference frames that form a Platonic symmetry group ($\gG \subset O(3)$) and constrains all linear layers to be equivariant with respect to this choice of frame. This principled scheme allows the standard attention block, including its unmodified Rotary Position Embeddings (RoPE), to operate in parallel across these frames, and effectively associates each reference frame with a distinct attention head. As a result, the model incorporates a geometric inductive bias without altering the architecture or computational footprint of a standard Transformer. This enables flexible usage across domains at no additional cost, resolving the long-standing symmetry-awareness vs. scaling dilemma.

Additionally, we analyze the formal connection between RoPE-based attention and convolution to highlight its underlying inductive bias. We show that when the softmax operation is omitted, the attention becomes mathematically equivalent to a dynamic, content-aware convolution. Moreover, in this convolutional setting, the attention operator's complexity scales linearly with the number of tokens, akin to methods like Performer \citep{choromanski2020rethinking}. This result reframes RoPE-attention as a mechanism that explicitly learns and applies dynamic, content-aware geometric filters.

\section{Background: Transformers with Position Embeddings}
\label{sec:background}

The core of a Transformer is its self-attention mechanism, which computes outputs for a sequence of 
input features $\{f_i \in \mathbb{R}^C\}$ based on pairwise interactions. To perform spatial tasks, 
this operation must incorporate the position $p_i \in \mathbb{R}^n$ associated with each feature 
$f_i$. This positional information, often added via absolute or relative encodings, allows the 
model to learn relationships that respect geometric symmetries.

\subsection{Vanilla Attention and Absolute Positioning}
Given a sequence of input features $\mathbf{f}_i \in \R^C$, the self-attention layer first computes query, key, and value vectors via linear projections:  $\mathbf{q}_i=\mathbf{W}^Q\mathbf{f}_i$, $\mathbf{k}_j=\mathbf{W}^K\mathbf{f}_j$, $\mathbf{v}_j=\mathbf{W}^V\mathbf{f}_j$. Here, the learnable weight matrices are $\mathbf{W}^Q, \mathbf{W}^K \in \R^{C \times d}$ and $\mathbf{W}^V \in \R^{C \times C'}$. The output for the $i$-th feature, $\mathbf{y}_i \in \R^{C'}$, is a weighted sum of the value vectors, with weights determined by softmax-normalized dot products of queries and keys:
$$\mathbf{y}_{i} = \sum_{j=1}^N \operatorname{attn}(\mathbf{q}_i, \mathbf{k}_j) \mathbf{v}_j \ $$
where $$ \quad \operatorname{attn}(\mathbf{q}_i, \mathbf{k}_j) = \underset{j}{\text{softmax}}\left( \mathbf{q}_i^\top \mathbf{k}_j \right) \, \nonumber.
$$
As this operation is permutation-equivariant, it is insensitive to the order of the inputs and must be modified to incorporate positional information for spatial tasks. A common approach is to use Absolute Positional Embeddings (APE), where a unique vector $\mathbf{E}(\mathbf{p}_i)$ is added to each input feature, $\mathbf{f}'_i = \mathbf{f}_i + \mathbf{E}(\mathbf{p}_i)$, before the linear projections are applied. The attention score is then computed from these position-aware features. However, since this interaction depends on absolute coordinates rather than relative positions, APE is not translation-equivariant.

\subsection{Rotary Position Embeddings (RoPE)}
RoPE achieves a more structured approach to position encoding \citep{su2024roformer}. Instead of adding a positional vector, RoPE modifies the query and key vectors with a position-dependent transformation, making the attention score explicitly dependent on relative positions.

This transformation is constructed by stacking 2D rotation matrices, giving RoPE its name.
To apply RoPE with positions $\mathbf{p}$ in dimension $n>1$, we use a set of $n$-dimensional frequency vectors 
$\Omega = \{\mathbf{\omega}_k\}_{k=1}^{d/2}$, each defining a direction used to project $\mathbf{p}$ to 1D and a frequency 
used to apply 1D-RoPE in this direction. We obtain $d/2$ blocks,
\begin{equation}
    \rho_{\mathbf{\omega}_k}(\mathbf{p}) =
    \begin{pmatrix}
        \cos(\mathbf{\omega}_k^\top\mathbf{p}) & -\sin(\mathbf{\omega}_k^\top\mathbf{p}) \\
        \sin(\mathbf{\omega}_k^\top\mathbf{p}) & \cos(\mathbf{\omega}_k^\top\mathbf{p})
    \end{pmatrix},
\end{equation}
which are stacked in a block-diagonal manner to form a single transformation matrix, $\mathbf{\rho}_{\Omega}(\mathbf{p})$:
\begin{equation}
    \mathbf{\rho}_{\Omega}(\mathbf{p}) = \operatorname{diag}(\rho_{\mathbf{\omega}_1}(\mathbf{p}), \dots, \rho_{\mathbf{\omega}_{d/2}}(\mathbf{p})) \, .
\end{equation}
Note that while $\mathbf{\rho}_\Omega(\mathbf{p})$ is a high-dimensional rotation, this rotation is not related to rotations of the position $\mathbf{p}$.
In fact, $\mathbf{\rho}_\Omega$ is instead connected to translations of $\mathbf{p}$, formally discussed in Appendix~\ref{app:rope_derivation}.

For a query $\mathbf{q}_i$ at position $\mathbf{p}_i$ and a key $\mathbf{k}_j$ at position $\mathbf{p}_j$, $\mathbf{\rho}_\Omega$ is applied before the dot product. As the operator $\mathbf{\rho}_{\Omega}$ is orthogonal and satisfies the homomorphism property\footnote{The operator $\mathbf{\rho}_{\Omega}(\mathbf{p})$ being orthogonal means its inverse is its transpose: $\mathbf{\rho}_{\Omega}(\mathbf{p})^{-1} = \mathbf{\rho}_{\Omega}(\mathbf{p})^\top$. The homomorphism property for the translation group satisfies $\mathbf{\rho}_{\Omega}(\mathbf{p_i} + \mathbf{p_j}) = \mathbf{\rho}_{\Omega}(\mathbf{p_i})\mathbf{\rho}_{\Omega}(\mathbf{p_j})$.} for translations, the interaction simplifies to depend only on relative positions:
\begin{align}\label{eq:rope}
    \left(\mathbf{\rho}_\Omega(\mathbf{p}_i)\mathbf{q}_i\right)^\top \left(\mathbf{\rho}_\Omega(\mathbf{p}_j)\mathbf{k}_j\right) &= \mathbf{q}_i^\top \mathbf{\rho}_\Omega(\mathbf{p}_i)^\top \mathbf{\rho}_\Omega(\mathbf{p}_j) \mathbf{k}_j \\
    &= \mathbf{q}_i^\top \mathbf{\rho}_\Omega(\mathbf{p}_j - \mathbf{p}_i) \mathbf{k}_j \, \nonumber.
\end{align}
This final form reveals the core property of RoPE. Although widely adopted for its empirical success, the mechanism's effectiveness is not coincidental; it directly embeds translation equivariance into the attention mechanism by making the score a function of content and relative positions. This powerful geometric inductive bias, often hidden within the standard Transformer framework, provides a principled reason for RoPE's strong performance \citep{chen2023extending,dai2019transformer}. The formal construction of this operator from the first principles of group theory is detailed in Appendix~\ref{app:rope_derivation}.

\begin{figure*}[t]
    \centering
    \includegraphics[width=\textwidth]{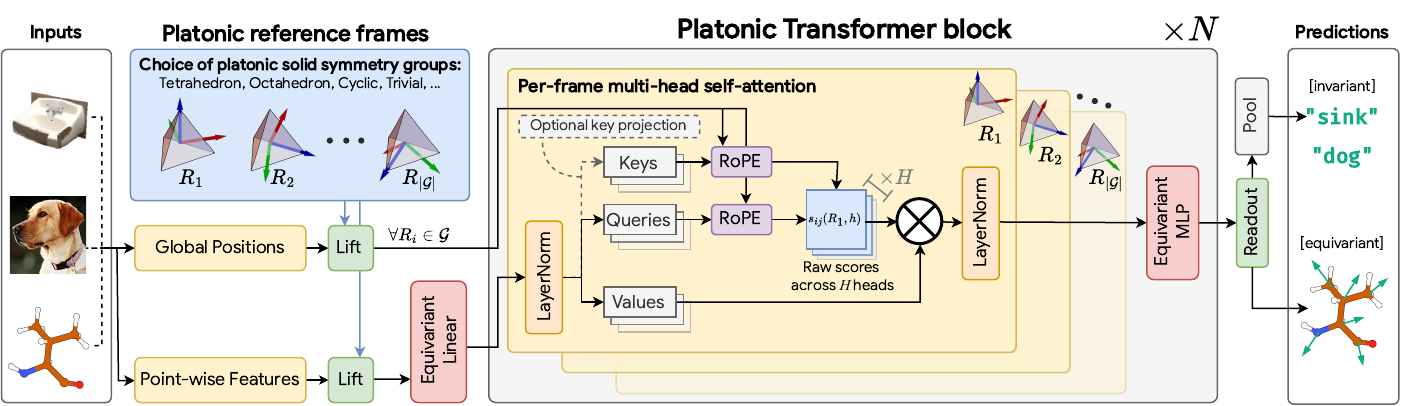}
    \caption{Visualization of Weight-Shared RoPE within the $N$-layer Platonic Transformer. Scalar and vector inputs are lifted to become functions on the platonic solid symmetry group of choice (here, the Tetrahedral group).
    The same multi-head self-attention mechanism is applied in parallel, with each instance rotating the features according to a different reference frame $R_i \in \mathcal{G} $. Choosing the trivial group as $\gG$ reduces this framework to a standard Transformer.}
    \label{fig:weight_shared_rope_viz}
\end{figure*}

\section{The Platonic Transformer}
\label{sec:platonic_transformer}

We generalize the principle of RoPE to obtain equivariance not only under continuous translations, but also discrete roto-reflections. 
We obtain roto-reflection equivariance by redefining the positional encoding relative to a set of reference frames defined as elements in a discrete subgroup $\gG \subset O(n)$.
Traditional RoPE-attention operates on a single global reference frame.
Instead, we perform attention on multiple frames in parallel.
A key advantage of our method is that it leaves the rope-attention mechanism and the overall computation graph unchanged from the traditional transformer.

\subsection{Features relative to reference frames}

Throughout the architecture, features are represented and processed \textit{relative to the reference frames} defined by the elements of a discrete group $\gG \subset O(n)$. 
Since input features are typically defined in a global frame of reference, they must first be \textit{lifted} to become functions on the group $\gG$. 
Specifically, each feature becomes a map $\mathbf{f}_i(\cdot): \gG \rightarrow \mathbb{R}^C$, where $\mathbf{f}_i(R)$ is the feature vector at point $i$ viewed from frame $R \in \gG$. 
For the finite groups we consider, this map is represented as a tensor of shape $[|\gG|, C]$. We denote this tensor simply as a flattened vector $\mathbf{f}_i \in \mathbb{R}^{|G| \cdot C}$ and use the functional notation $\mathbf{f}_i(\cdot)$ to emphasize its role as a feature map. As we will see, the flattened vector viewpoint is key to preserving the standard Transformer computation graph.

The lifting process depends on the geometric type of the input feature. 
Scalar features, being invariant to viewpoint, are lifted to constant functions by copying them across all frames. 
Vector features, in contrast, are expressed relative to each frame; for example, a single 3D vector feature $\mathbf{u} \in \mathbb{R}^3$ is lifted to a three-channel signal on the group via the transformation $\mathbf{f}(R) = R^{-1}\mathbf{u}$. 
All such lifted components can be concatenated, after which they are processed by the subsequent equivariant, frame-dependent attention layers.

\subsection{Weight-sharing across RoPE Embeddings}

The key step for achieving equivariance to $\gG$ as well as translations is making the RoPE operator itself dependent on the reference frames. 
This is achieved by projecting the position $\mathbf{p}_i$ of each input token $i$ onto $R$, which yields views $\mathbf{p}_i(R)=R^{-1}\mathbf{p}_i$ relative to each frame. 
As the queries $\mathbf{q}_i$, keys $\mathbf{k}_j$, and values $\mathbf{v}_j$ are obtained by applying equivariant linear projections (cf. Section~\ref{sec:eq_layer_fixed_graph}) to the feature maps $\mathbf{f}_i$, they are also functions on the group.
We can then compute the unnormalized attention scores from the perspective of frame $R$, which we denote as $s_{ij}(R)$:
\begin{align}
    s_{ij}(R) &= \mathbf{q}_i(R)^\top \rho_\Omega(\mathbf{p}_j(R) - \mathbf{p}_i(R))\mathbf{k}_j(R) \\
    & = \mathbf{q}_i(R)^\top \rho_\Omega((\mathbf{p}_j - \mathbf{p}_i)(R))\mathbf{k}_j(R) \, .
\end{align}
Scores for each frame are computed \textit{in parallel} as their own independent attention head.
Note that we can also obtain $s_{ij}(R)$ by steering the base set of frequencies $\Omega$ instead of the positions $\mathbf{p}_i$, which we show in Appendix~\ref{app:rope_frequency_steering}. 
However, from our current perspective, the RoPE-attention mechanism itself remains completely unchanged from its traditional formulation in Eq.~\ref{eq:rope}; only the relative positions $\mathbf{p}_i - \mathbf{p}_j$ are now defined relative to each reference frame $R$. 
The attention coefficients are obtained by applying the softmax to the scores $s_{ij}(R)$. 
The output $\mathbf{y}_i(R)$ for each token $i$ is then given as,
\begin{align}
    \mathbf{y}_i(R) &= \sum_{j=1}^N \operatorname{attn}_{ij}(R) \mathbf{v}_j(R) \, , \\ 
    &\text{where} \quad \operatorname{attn}_{ij}(R) = \underset{j}{\operatorname{softmax}}(s_{ij}(R)) \, \nonumber.
    \label{eq:plato_attn}
\end{align}
This process naturally results in an output tensor $\mathbf{y}_i \in \R^{|\gG| \cdot C}$, where the features are defined relative to each frame. Notably, the base frequencies $\Omega$ of RoPE are shared across frames and this leads to the operator being equivariant to the roto-reflections in $G$, as we detailed in Appendix~\ref{app:equivariance_properties}.

\subsection{Equivariant Linear Layers and Fixed Computational Graph}
\label{sec:eq_layer_fixed_graph}

All linear transformations, including the query, key, and value projections ($\mathbf{W}^Q, \mathbf{W}^K, \mathbf{W}^V$), and any MLP blocks, must be equivariant. 
As our features can be viewed either as functions on the group, $\mathbf{f}_i(\cdot)$, or as flattened vectors, $\mathbf{f}_i \in \mathbb{R}^{|\gG| \cdot C}$, we can describe the action of an equivariant linear layer $\Phi$ from both perspectives. 
From the flattened vector viewpoint, the layer is a standard matrix-vector multiplication, $\mathbf{y}_i = \mathbf{W}\mathbf{f}_i$. 
However, for this transformation to be equivariant, the weight matrix $\mathbf{W}$ cannot be arbitrary; it must have a specific, constrained structure.

The equivariance constraint is defined from the functional viewpoint: 
for any group element $R \in \gG$, the transformation must satisfy $\Phi(L_R \mathbf{f}_i) = L_R (\Phi(\mathbf{f}_i))$, where $L_R$ is the action of rotating the reference frames, i.e., $(L_R \mathbf{f}_i)(\tilde{R}) = \mathbf{f}_i(R^{-1}\tilde{R})$. 
This constraint is satisfied \textit{if and only if} the layer's operation is a \emph{group convolution} \citep[Thm. 3.1]{cohen2019general}. 
This gives the layer a dual identity: it is a convolution over the group axis, which is mathematically equivalent to a matrix-vector multiplication with a structured, weight-shared matrix:
\begin{align}
 (\Phi(\mathbf{f}_i))(R) &:= \sum_{\tilde{R} \in \gG} \mathbf{W}_{\text{group}}(R^{-1}\tilde{R}) \, \mathbf{f}_i(\tilde{R}) \\ 
   & \iff \quad \Phi(\mathbf{f}_i) := \mathbf{W} \mathbf{f}_i \, 
\label{eq:gconv}
\end{align}

Here, $\mathbf{W}_{\text{group}}: \gG \to \R^{C' \times C}$ is a learnable kernel defined on the group. 
The large matrix $\mathbf{W} \in \mathbb{R}^{(|\gG| \cdot C') \times (|\gG| \cdot C)}$ is a block matrix whose blocks are determined by the kernel values: $[\mathbf{W}]_{R,\tilde{R}} = \mathbf{W}_{\text{group}}(R^{-1}\tilde{R})$. 
This structure imposes a weight-sharing scheme where the interaction between input and output frames depends only on their \emph{relative pose}, $R^{-1}\tilde{R}$. 
The layer is thus constrained to learn patterns from the geometric arrangement of features, rather than their absolute pose.

While the group convolution formulation makes the geometric inductive bias explicit, the matrix-vector viewpoint clarifies that this is in essence a principled \emph{weight-sharing scheme} that preserves the computation graph of a standard linear layer \rednote{ from $|\gG| \cdot C$ to $|\gG| \cdot C'$ channels} (we're still doing matrix-vector multiplication). 
A favorable side-effect, however, is that this structure reduces the parameter count from the $(|\gG| \cdot C') \times (|\gG| \cdot C)$ of an unconstrained layer to just $|\gG| \cdot C' \cdot C$---a reduction by a factor of $|\gG|$.

Crucially, by choosing the number of channels $C$ such that the effective feature dimension $C \cdot |\gG|$ is held constant, the overall matrix dimensions are identical regardless of the group size. 
The trivial group $\gG=\{\mathbf{e}\}$ illustrates the base case, where the operation collapses to a standard linear layer with a weight matrix $\mathbf{W}_{\text{group}}(\mathbf{e})$ of size $C' \times C$. 
The geometric inductive bias is therefore not introduced by adding new, complex modules, but by imposing a structure on the weights of existing ones.\footnote{This structure can even give computational benefits, by implementing the linear layers in the Fourier domain of $\gG$~\citep{bokman2025flopping}. 
In Appendix~\ref{app:fourier_implementation}, we find that at marginally higher channel counts than used in this paper, a Fourier implementation leads to greatly improved training throughput, indicating that this is a promising direction for future research.}

With all components of the architecture now defined as equivariant operations, we can formally state the key property of the full model, namely equivariance under the discrete group $\mathcal{G}\subset O(n)$.

\begin{proposition}[End-to-End Equivariance]
\label{prop:end_to_end_equivariance}
Our proposed Transformer architecture is an equivariant model. A global roto-reflection $R \in \gG$ applied to the input point cloud results in a corresponding transformation $L_R$ of the final output feature maps.
\end{proposition}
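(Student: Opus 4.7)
The plan is to establish equivariance componentwise and then invoke closure under composition. Concretely, I would verify in order: (i) the lifting of scalar and vector inputs to feature fields on $\gG$; (ii) the construction of the frame-relative positions that feed RoPE; (iii) the query, key, and value projections together with any MLP block, all of which are constrained to be group convolutions; (iv) the softmax attention and value aggregation of Eq.~\ref{eq:plato_attn}; and (v) residual connections and any invariant readout at the end. Translation equivariance is already inherited from standard RoPE via the relative-position identity in Eq.~\ref{eq:rope}, so the only remaining work is covariance under a global roto-reflection $R_0 \in \gG$.

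First I would check the lifting step. A global $R_0$ sends positions $\mathbf{p}_i \mapsto R_0 \mathbf{p}_i$ and input vectors $\mathbf{v}_i \mapsto R_0 \mathbf{v}_i$, while scalars are left untouched. Projecting onto frame $\tilde R \in \gG$ gives $\tilde R^{-1} R_0 \mathbf{v}_i = (R_0^{-1}\tilde R)^{-1}\mathbf{v}_i$, which coincides with $(L_{R_0}\mathbf{f}_i)(\tilde R)$ by the definition of $L_{R_0}$ given in Section~\ref{sec:eq_layer_fixed_graph}; scalars satisfy the same relation trivially because they are broadcast across frames. The same calculation yields $\mathbf{p}_i(\tilde R)^{\text{new}} = \mathbf{p}_i(R_0^{-1}\tilde R)^{\text{old}}$, so every relative difference entering $\bm{\rho}_\Omega$ matches its pre-rotation value at the shifted frame $R_0^{-1}\tilde R$.

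Next I would invoke Eq.~\ref{eq:gconv} to conclude that every learnable linear map $\Phi$ already satisfies $\Phi(L_{R_0}\mathbf{f}_i) = L_{R_0}(\Phi(\mathbf{f}_i))$ by construction. Chaining this with the previous step, the unnormalized scores reshuffle as $s_{ij}(\tilde R)^{\text{new}} = s_{ij}(R_0^{-1}\tilde R)^{\text{old}}$, and because the softmax is applied independently within each frame over $j$, the same relation lifts to $\operatorname{attn}_{ij}(\tilde R)$. Aggregating against the covariantly-transformed values then yields $\mathbf{y}_i(\tilde R)^{\text{new}} = (L_{R_0}\mathbf{y}_i)(\tilde R)$. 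Residual connections, per-frame normalizations, and pointwise MLPs (themselves group convolutions) all commute with $L_{R_0}$, so equivariance propagates through every block; induction on depth gives equivariance of the full stack, and combining with the RoPE-based translation equivariance completes the claim.

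The main obstacle I anticipate is the bookkeeping that ties the frame rotation to the RoPE operator: the identity $\tilde R^{-1} R_0 \mathbf{p}_i = (R_0^{-1}\tilde R)^{-1}\mathbf{p}_i$ must line up precisely so that the block-diagonal rotation $\bm{\rho}_\Omega(\cdot)$ sees exactly the relative-position shift that $L_{R_0}$ demands. This is the single design choice that makes the mechanism work; once it is in place the remainder is routine diagram-chasing. A secondary care point is ensuring that scalar and vector input channels are lifted in a way compatible with the same $L_{R_0}$ action, so that the mixed-type input to the first group-convolutional layer transforms coherently.
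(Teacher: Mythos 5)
Your proposal is correct and follows essentially the same route as the paper's own argument in Appendix~\ref{app:equivariance_properties}: establish that the lifting, the group-convolutional linear layers, and the frame-parallel RoPE attention are each equivariant, then compose. Your verification of the lifting identity $\tilde R^{-1}R_0\mathbf{v}_i = (R_0^{-1}\tilde R)^{-1}\mathbf{v}_i$ and of the score reshuffling $s'_{ij}(\tilde R) = s_{ij}(R_0^{-1}\tilde R)$ matches the paper's Proposition~\ref{prop:attention} step for step; the extra remarks about residuals, per-frame normalization, and induction on depth are correct fillers that the paper leaves implicit.
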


The proof is given in Appendix \ref{app:equivariance_properties}.

\begin{figure}
    \centering
    \includegraphics[width=.5\linewidth,trim=0 5.2cm 0 0,clip]{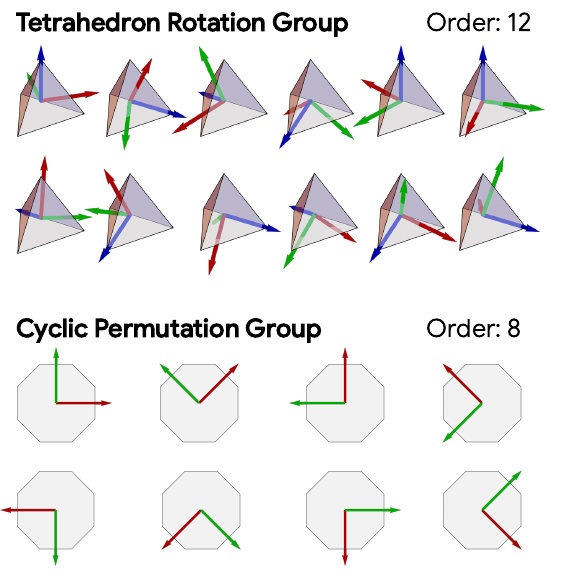}
    \includegraphics[width=.45\linewidth,trim=0 0 0 5.4cm,clip]{figures/platonic-solid-examples-v2.pdf}
    \caption{Elements of the symmetry groups of platonic solids form a subgroup of $SO(3)$.}\label{platonic-solids-example}   
\end{figure}


\subsection{Frame Selection Via Platonic Solids}
\label{sec:platonic_solids}
The final step is to select a suitable subgroup $\gG \subset O(n)$ to serve as the reference frames. We select them from the discrete symmetry groups of regular polygons and polyhedra, with different considerations for 2D and 3D as illustrated in Figure~\ref{platonic-solids-example}.

\emph{In 3D}, we restrict our frames to the finite \emph{rotational} symmetry groups ($\gG \subset SO(3)$) of the Platonic solids: the \emph{tetrahedral} (12 rotations), \emph{octahedral} (24 rotations), and \emph{icosahedral} (60 rotations) groups. While these solids have larger full symmetry groups that include reflections (e.g., 24 total symmetries for the tetrahedron), we focus on the purely rotational subgroups for a more tractable structure.

\emph{In 2D}, we consider discrete subgroups of $O(2)$, which correspond to the symmetries of regular polygons. This includes both the rotation-only \emph{cyclic groups} ($C_n$) and the \emph{dihedral groups} ($D_n$), which contain both rotations and reflections. Here $n$ denotes the group's order. This discrete subgroup approach is advantageous for two reasons. First, it provides a finite set of frames that forms a structured and approximately uniform discretization of the underlying continuous spaces of orientations ($SO(3)$ in 3D and $O(2)$ in 2D). Second, and more critically, these frames form a group. This is essential for maintaining a meaningful geometric structure, as it ensures that layers can operate equivariantly, keeping features coherently defined relative to our chosen frames throughout the network.


The advantage of working with a \textit{finite} group $\gG$ is that its operations can be handled discretely and efficiently using \emph{Cayley tables}. We assign a unique index $i \in \{0, \dots, |\gG|-1\}$ to each rotation $R_i \in \gG$. The group product $R_i R_j = R_k$ can then be precomputed and stored in the Cayley table, a simple look-up table where $\operatorname{Cayley}[i, j] = k$.
This discrete formalism makes the group action on our feature maps, which are functions on the group $f: \gG \to \R^C$, extremely efficient.
A rotation of this feature map by an element $R_i$, defined by the action $(L_{R_i}f)(R_j) = f(R_i^{-1} R_j)$, simplifies to a permutation of the feature tensor's entries.
With the Cayley table, the new feature at position $j$ is simply copied from the old feature at position $k = \operatorname{Cayley}[\operatorname{inverse}[i], j]$. 


\section{Inductive Bias of Platonic Transformers}
This section examines the Platonic Transformer's structural inductive biases. We highlight its interpretation as a dynamic group convolution and its equivariant attention, contrasting these with approaches based on invariant attention.

\subsection{Platonic Transformer as Dynamic Group Convolution}
\label{sec:dynamic_group_convolution}

The use of RoPE in a linear attention setting establishes a deep connection to convolution. Specifically, the mechanism implements an adaptive convolution where the kernel is synthesized on-the-fly. This dynamic kernel is expressed as an expansion in a sparse Fourier basis, defined by the RoPE frequencies, and the coefficients for this basis expansion are provided by the query vectors. This makes the convolution content-aware. We formalize this as follows (proof in Appendix~\ref{app:proof_prop_dynamic_conv}).

\begin{proposition}[Linear RoPE Attention as Dynamic Convolution]
\label{prop:dynamic_convolution}
Consider a standard linear attention layer using RoPE with constant key vectors ($\mathbf{k}_j=\mathbf{1}$). The layer's output $\mathbf{y}_i$ is mathematically equivalent to a dynamic convolution:
\begin{equation}
\label{eq:dynamic_correlation}
    \mathbf{y}_i = \sum_{j=1}^N \phi_{\mathbf{q}_i}(\mathbf{p}_j - \mathbf{p}_i) \mathbf{v}_j \, ,
\end{equation}
where the dynamic kernel $\phi_{\mathbf{q}_i}$ is given by the inverse sparse Fourier transform:
\begin{small}
\begin{equation}
    \phi_{\mathbf{q}_i}(\Delta\mathbf{p}) = \sum_{k=1}^{d/2} \left[ a_k(\mathbf{q}_i) \cos(\mathbf{\omega}_k^\top \Delta\mathbf{p}) + b_k(\mathbf{q}_i) \sin(\mathbf{\omega}_k^\top \Delta\mathbf{p}) \right] \, .
\end{equation}
\end{small}
The Fourier coefficients are given by the linear projections $a_k(\mathbf{q}_i) = q_{i, 2k-1} + q_{i, 2k}$ and $b_k(\mathbf{q}_i) = q_{i, 2k} - q_{i, 2k-1}$, where $q_{i,m}$ is the $m$-th element of the query vector $\mathbf{q}_i$.
\end{proposition}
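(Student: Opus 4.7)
The plan is to do a direct calculation that unpacks the definition of linear RoPE attention when the key vectors are fixed to all ones, and then recognize the resulting expression as a finite Fourier expansion of a translation kernel. First I would start from the softmax-free attention output $\mathbf{y}_i = \sum_j s_{ij}\mathbf{v}_j$, where by Eq.~\ref{eq:rope} the score is $s_{ij} = \mathbf{q}_i^\top \bm{\rho}_\Omega(\mathbf{p}_j - \mathbf{p}_i)\mathbf{k}_j$. Substituting $\mathbf{k}_j = \mathbf{1}$ removes all $j$-dependence except through $\Delta\mathbf{p}_{ij} := \mathbf{p}_j - \mathbf{p}_i$, so defining the scalar $\phi_{\mathbf{q}_i}(\Delta\mathbf{p}) := \mathbf{q}_i^\top \bm{\rho}_\Omega(\Delta\mathbf{p})\mathbf{1}$ immediately rewrites the output as $\mathbf{y}_i = \sum_j \phi_{\mathbf{q}_i}(\mathbf{p}_j - \mathbf{p}_i)\mathbf{v}_j$, which already has the shape of a dynamic convolution over the value field.

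Next I would exploit the block-diagonal structure of $\bm{\rho}_\Omega$ to obtain the closed form of $\phi_{\mathbf{q}_i}$. For each $k$, the $2\times 2$ block $\rho_{\bm{\omega}_k}(\Delta\mathbf{p})$ applied to the corresponding pair of key entries $(1,1)^\top$ produces the column $(\cos(\bm{\omega}_k^\top\Delta\mathbf{p}) - \sin(\bm{\omega}_k^\top\Delta\mathbf{p}),\; \sin(\bm{\omega}_k^\top\Delta\mathbf{p}) + \cos(\bm{\omega}_k^\top\Delta\mathbf{p}))^\top$. Contracting with the associated query pair $(q_{i,2k-1}, q_{i,2k})$ and grouping the $\cos$ and $\sin$ terms yields $(q_{i,2k-1} + q_{i,2k})\cos(\bm{\omega}_k^\top\Delta\mathbf{p}) + (q_{i,2k} - q_{i,2k-1})\sin(\bm{\omega}_k^\top\Delta\mathbf{p})$, which is exactly $a_k(\mathbf{q}_i)\cos(\bm{\omega}_k^\top\Delta\mathbf{p}) + b_k(\mathbf{q}_i)\sin(\bm{\omega}_k^\top\Delta\mathbf{p})$ with the coefficients stated in the proposition. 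Since $\bm{\rho}_\Omega$ is block-diagonal, the score decomposes as an independent sum over $k$, giving the advertised closed form for $\phi_{\mathbf{q}_i}$.

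Finally, I would observe that the kernel obtained above is literally the inverse Fourier transform evaluated on the sparse frequency set $\Omega = \{\bm{\omega}_k\}$, which justifies calling the construction an inverse sparse Fourier transform; the coefficients $a_k, b_k$ are linear in $\mathbf{q}_i$, which is what makes the resulting convolution content-aware. There is no real obstacle here; the argument is a one-line identity per block followed by summation. The only genuine care is in keeping the index convention $q_{i,2k-1}, q_{i,2k}$ aligned with the row convention of the $2\times 2$ RoPE block, as a swap would flip the sign of $b_k$. Aside from this bookkeeping, all steps are routine linear algebra given the identities already established in Section~\ref{sec:background}.
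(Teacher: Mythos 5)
Your proof is correct and follows essentially the same route as the paper's: substitute constant keys, exploit the block-diagonal structure of $\bm{\rho}_\Omega$, expand each $2\times2$ block against $(1,1)^\top$, and collect $\cos$/$\sin$ terms to read off $a_k$ and $b_k$. The only cosmetic difference is that the paper begins from the pre-simplified form $(\bm{\rho}(\mathbf{p}_i)\mathbf{q}_i)^\top(\bm{\rho}(\mathbf{p}_j)\mathbf{1})$ and applies the homomorphism/orthogonality identity, whereas you invoke Eq.~\ref{eq:rope} to start already at $\mathbf{q}_i^\top\bm{\rho}_\Omega(\Delta\mathbf{p})\mathbf{1}$; the block-wise calculation and resulting coefficients are identical.
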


\begin{remark}[Purely Geometric vs. Mixed Kernels]
\label{cor:geometric_filters}
This result recasts the query's role: rather than simply probing for content, $\mathbf{q}_i$ enables the parameters to construct a unique geometric filter. The formulation of the key vector is a design choice.
The constant-key formulation ($\mathbf{k}_j = \mathbf{1}$) forces the model to learn purely geometric, content-adaptive convolution operators. In contrast, a learned key ($\mathbf{k}_j = \mathbf{W}^K\mathbf{f}_j$) results in a mixed kernel whose coefficients depend on both query and key features, and thus entangles geometry and signal, possibly increasing expressivity while making score magnitudes and optimization more sensitive unless stabilized.
\end{remark}

This gives a practical expressivity--stability trade-off. Fixed keys enforce a purely geometric, content-adaptive kernel and were the most robust choice in our QM9 ablations (Appendix~\ref{app:learned_keys_ablation}). We hypothesize that this robustness is especially useful for molecular tasks, where the target is governed by universal \textit{physical principles} that are largely functions of geometry and atom types, whereas computer vision tasks such as ScanObjectNN often involve learning \textit{statistical correlations} between local appearance and global shape. A mixed kernel from learned keys can entangle these physical principles with instance-specific chemical environments, creating a more delicate optimization problem as the model attempts to learn a general physical law while simultaneously fitting local molecular context. In computer vision, this same entanglement can be beneficial, as learning the statistical interplay between features and geometry is often the primary objective. We therefore use fixed keys for the smaller QM9 setting, while for the larger OMol25 setting we use learned keys together with QK normalization, which controls query/key magnitudes before the dot product and stabilizes training. Regardless, the convolution perspective further leads to a key practical advantage.

\begin{corollary}[Linear-Time Complexity]
\label{cor:linearity}
The dynamic convolution in Proposition~\ref{prop:dynamic_convolution} can be computed in $O(N)$ time, where $N$ is the number of tokens or points in the point cloud. This offers a scalable alternative to standard attention, which has a quadratic complexity of $O(N^2)$.
\end{corollary}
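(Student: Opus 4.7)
The plan is to apply the standard linear-attention reordering trick to the RoPE-modulated dot product. Although $\phi_{\mathbf{q}_i}(\mathbf{p}_j - \mathbf{p}_i)$ appears to couple $i$ and $j$, the homomorphism property of $\bm{\rho}_\Omega$ factorizes it into terms that separately depend on $i$ and $j$, which is exactly the condition needed to hoist the sum over $j$ out of the per-$i$ computation and precompute it once.

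First I would rewrite the output of Proposition~\ref{prop:dynamic_convolution} in its original inner-product form. With constant keys $\mathbf{k}_j = \mathbf{1}$, the linear-attention output is
\begin{equation}
    \mathbf{y}_i \;=\; \sum_{j=1}^N \mathbf{q}_i^\top \bm{\rho}_\Omega(\mathbf{p}_j - \mathbf{p}_i)\,\mathbf{1}\; \mathbf{v}_j.
\end{equation}
Since each $2\times 2$ block of $\bm{\rho}_\Omega$ is orthogonal and $\bm{\rho}_\Omega$ is a translation homomorphism (Eq.~\ref{eq:rope}), we have $\bm{\rho}_\Omega(\mathbf{p}_j - \mathbf{p}_i) = \bm{\rho}_\Omega(\mathbf{p}_i)^\top \bm{\rho}_\Omega(\mathbf{p}_j)$. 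Introducing the position-modulated query and key
\begin{equation}
    \tilde{\mathbf{q}}_i \;=\; \bm{\rho}_\Omega(\mathbf{p}_i)\,\mathbf{q}_i, \qquad \tilde{\mathbf{k}}_j \;=\; \bm{\rho}_\Omega(\mathbf{p}_j)\,\mathbf{1},
\end{equation}
the score collapses to a vanilla dot product, $s_{ij} = \tilde{\mathbf{q}}_i^\top \tilde{\mathbf{k}}_j$, between position-aware vectors that are each a function of a single index.

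Next I would swap the scalar-vector product with the sum, writing
\begin{equation}
    \mathbf{y}_i \;=\; \sum_{j=1}^N (\tilde{\mathbf{q}}_i^\top \tilde{\mathbf{k}}_j)\,\mathbf{v}_j \;=\; \Bigl(\sum_{j=1}^N \mathbf{v}_j\,\tilde{\mathbf{k}}_j^\top\Bigr)\,\tilde{\mathbf{q}}_i \;=:\; \mathbf{M}\,\tilde{\mathbf{q}}_i,
\end{equation}
where the matrix $\mathbf{M} \in \R^{C' \times d}$ no longer depends on $i$. Computing $\mathbf{M}$ requires a single pass of $N$ outer products at $O(C'd)$ each, and afterwards every $\mathbf{y}_i$ costs one matrix-vector multiply of $O(C'd)$. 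The total is thus $O(N\,C'\,d)$, i.e.\ $O(N)$ in the number of tokens, exactly mirroring the Performer-style linearization with RoPE playing the role of the feature map that makes the score bilinear in position-modulated features.

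I do not anticipate a serious obstacle; the two points worth verifying carefully are (i) the identity $\bm{\rho}_\Omega(-\mathbf{p}) = \bm{\rho}_\Omega(\mathbf{p})^\top$, which is immediate from the block-orthogonality and the homomorphism property already used in Eq.~\ref{eq:rope}, and (ii) that the argument transfers to the Platonic variant: since the frame-wise attention in Eq.~\ref{eq:plato_attn} is a set of independent parallel heads indexed by $R \in \mathcal{G}$, the same factorization applies head-wise, yielding an overall $O(|\mathcal{G}|\,N)$ cost that remains linear in $N$.
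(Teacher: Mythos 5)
Your proof is correct and takes essentially the same route as the paper's: both define position-modulated queries and keys $\tilde{\mathbf{q}}_i = \bm{\rho}_\Omega(\mathbf{p}_i)\mathbf{q}_i$, $\tilde{\mathbf{k}}_j = \bm{\rho}_\Omega(\mathbf{p}_j)\mathbf{1}$ and then exploit associativity to precompute $\sum_j \mathbf{v}_j \tilde{\mathbf{k}}_j^\top$ once, reducing the cost to $O(N d' d_v)$. Your per-element presentation via $\mathbf{y}_i = \mathbf{M}\tilde{\mathbf{q}}_i$ is equivalent to the paper's matrix-form $\mathbf{Y} = \mathbf{Q}'((\mathbf{K}')^\top\mathbf{V})$, and your closing remark on the $O(|\mathcal{G}|N)$ Platonic extension is a correct and welcome addition not spelled out in the paper's own proof.
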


Within our Platonic Transformer, this entire mechanism is lifted to operate over the reference frames defined by a group $\gG$. Consequently, the operator becomes an adaptive \textit{group convolution} ({proof in Appendix~\ref{app:group_correlation}), where the kernel is steered by the group elements/reference frames.

\subsection{Invariant vs. Equivariant Attention Score}
\label{sec:inv_equi_attn_score}
Our approach implements an \emph{equivariant attention} mechanism, where the attention pattern is orientation-dependent. This contrasts with methods using an \emph{invariant attention} score, which applies the same pattern from all orientations \citep{fuchs2020se3, chen2022se, assaad2023vntransformer,  frank2024euclidean, knigge2024space, kundu2025steerable, nordstrom2025stronger}.

For multi-head attention with $H$ heads, let $\mathbf{q}_i(R, h)$, $\mathbf{k}_j(R, h)$, and $\mathbf{v}_j(R, h)$ denote the projected query, key, and value vectors for head $h$ from the perspective of frame $R$. 
In our equivariant approach, the raw scores $s_{ij}(R, h)$ are passed directly to the softmax. This allows the model to learn orientation-dependent attention patterns, making it a more expressive formulation that retains the rich geometric information in the features. The output is an equivariant feature map on the group:
\vspace{-0.1ex}
\begingroup\small
\begin{align}
\label{eq:equi_attention}
    \mathbf{y}_i(R, h) &= \sum_{j=1}^N \underset{j}{\text{softmax}}(\underbrace{s_{ij}(R, h)}_{R-\text{dependent}}) \mathbf{v}_j(R, h), \quad\!\!\! \\
    s_{ij}(R, h) &= \mathbf{q}_i(R, h)^\top \mathbf{\rho}_{\Omega_h}((\mathbf{p}_j - \mathbf{p}_i)(R)) \mathbf{k}_j(R, h) \, .
\end{align}
\endgroup
In practice, this is efficiently implemented by treating the $|\gG|$ perspectives as an independent set of attention heads. Tensors are reshaped so that the group and head dimensions are merged, e.g., to a shape of $[B, N, |\gG| \cdot H, C_h]$, before the dot product calculation.


In an invariant attention score, a single attention pattern is created by pooling the raw scores over the group axis before the softmax, akin to the \textit{symmetrization} in the RoPE-based approach of \citet{frank2024euclidean}. These invariant attention scores are then applied to the original equivariant value vectors. The resulting output is still equivariant, but it is derived from an \textit{orientation-agnostic attention pattern}:
\begin{align}
\label{eq:inv_attention}
    \mathbf{y}_i(R, h) &= \sum_{j=1}^N \underset{j}{\text{softmax}}(\underbrace{s^{\text{inv}}_{ij}(h)}_{R-\text{agnostic}}) \mathbf{v}_j(R, h) \, , \\ 
    &\text{where} \quad s^{\text{inv}}_{ij}(h) = \sum_{R \in \gG} s_{ij}(R, h) \, \nonumber.
\end{align}
Although simpler, this formulation sacrifices the model's ability to attend to features in an orientation-dependent manner. 
Implementing Eq.~\ref{eq:inv_attention} can be done by reshaping tensors so that the group and channel dimensions are merged, to shape $[B, N, H, |\gG|\cdot C_h]$, as then the dot-product in $s_{ij}$ and the sum in $s_{ij}^\text{inv}$ are simultaneously computed when taking the dot-product between queries and keys.
For a fully invariant output, one could additionally average the value vectors $\mathbf{v}_j(R, h)$ over the group to further collapse the geometric representation.

\section{Experiments}
\label{sec:experiments}
To validate our proposed architecture, we conduct a series of experiments across a number of different tasks and datasets. Our evaluation is structured to analyze the role of the equivariance inductive bias by categorizing tasks into two distinct settings based on their inherent geometric properties.

First, for tasks with inherent symmetry, such as those in QM9 \citep{ramakrishnan2014quantum} and OMol25 \citep{levine2025openmolecules2025omol25}, the underlying molecular systems have no canonical orientation. Their properties are determined by the relative positions of atoms and are independent of the global coordinate system. Since the physical laws governing these molecular properties are E(3)-symmetric, equivariance becomes a fundamental requirement for a model to generalize efficiently \citep{fuchs2020se3,bronstein2021geometric,batzner2022nequip, pacini2025universalityclassesequivariantnetworks,vadgama2025probing}. We refer to this category as \emph{Equivariant Tasks}.

Second, for tasks involving datasets with a canonical orientation, like CIFAR-10 \citep{krizhevsky2009learning} and ScanObjectNN \citep{uy-scanobjectnn-iccv19}, strict end-to-end equivariance is not required (the images/objects are aligned w.r.t. a canonical up-direction). These problems nevertheless provide a testbed to investigate if the geometric inductive bias of our model, enforced by weight-sharing, improves performance on its own merits. We refer to these as \emph{Non-Equivariant Tasks}. \rednote{We provide additional results on ImageNet-1K \citep{deng2009imagenet} in Appendix \ref{app:exp_details_imagenet1k}}}.

\subsection{Experimental Setup}

All Platonic Transformer variants are built upon RoPE, making them inherently \textit{translation-equivariant}. The degree of rotational equivariance is then determined by the choice of a discrete symmetry group $\gG \subset O(n)$ that defines the set of reference frames. For instance, selecting the trivial group ($\gG=\{\mathbf{e}\}$) results in a purely translation-equivariant model ($T(n)$); it uses only the identity frame. Choosing the rotational symmetry group of the Tetrahedron provides 12 reference frames, making the model approximately $SE(n)$-equivariant, or $E(n)$-equivariant when including reflections too.

For fair comparison, we match the computational cost between $SE(n)$ and $T(n)$ models by equating our group-based parallelism with standard multi-head attention. For instance, an $SE(n)$ model using the 12-element tetrahedral group with one head per frame is benchmarked against a $T(n)$ baseline with 12 total heads (details in App.~\ref{app:hyperparam_tuning_model_selection}). For certain tasks, symmetries can be conditionally broken by using APE or providing an \textit{external reference frame},  yet internal layers critically retain principled weight-sharing. Using external frames to break symmetry and APE to provide geometric information are effective strategies, allowing a model to benefit from geometric inputs without being end-to-end constrained by full equivariance \citep{vadgama2025probing}.

\begin{table}[t]
\centering
\caption{CIFAR-10 Accuracy (\%).}
\label{tab:cifar10_results}
\resizebox{0.8\columnwidth}{!}{%
  \begin{tabular}{lccc}
    \toprule
    \textbf{Group} & \textbf{Attention} & \textbf{Conv} & \textbf{\# Params}\\
                   & Acc. ($\uparrow$)   & Acc. ($\uparrow$) & \\
    \midrule
    $\{\mathbf{e}\}^*$ & $91.65_{\pm0.27}$ & $86.65_{\pm0.24}$ & $85.1M$ \\
    $C_4$         & $92.30_{\pm0.14}$ & $87.39_{\pm0.60}$ & $21.3M$ \\
    $C_6$         & $92.66_{\pm0.04}$ & $\textbf{87.62}_{\pm0.26}$ & $14.2M$ \\
    $D_2$         & $92.05_{\pm0.09}$ & $86.11_{\pm0.53}$ & $21.3M$ \\
    $D_3$         & $\textbf{92.78}_{\pm0.28}$ & $87.47_{\pm0.07}$ & $14.2M$ \\
    Flop          & $92.35_{\pm0.16}$ & $86.70_{\pm0.51}$ & $42.6M$ \\
    \bottomrule
  \end{tabular}}
\vspace{-1.8ex}
\end{table}

For CIFAR-10 and ScanObjectNN, we conducted a comprehensive sweep to find the optimal configuration. In contrast, for OMol25, we used a sequential process: first, we identified the best architecture via an extensive sweep on property prediction on QM9, then transferred these hyperparameters to OMol25 for further refinement with a one-million subset before full training (see Appendix \ref{app:exp_details_cifar}-\ref{app:exp_details_omol}).

We emphasize that the primary objective of these experiments is not the pursuit of state-of-the-art (SOTA) performance through exhaustive architectural engineering. Rather, we aim to empirically validate that our proposed lifting mechanism can convert a standard Vision Transformer into its equivariant counterpart while maintaining a strictly identical computational budget. Through this controlled comparison, we demonstrate that Platonic Transformers consistently achieve superior or competitive performance relative to task-specific SOTA methods, thereby establishing the practical efficacy of principled geometric weight-sharing.


\subsection{Non-Equivariant Tasks}

\paragraph{CIFAR-10}
The results of our ablation study on CIFAR-10 are presented in Table~\ref{tab:cifar10_results}. \rednote{  Flop denotes the group of left-to-right flips~\cite{bokman2025flopping}.} The findings indicate that incorporating 2D rotational symmetries provides a tangible benefit over the translation-only baseline (the $\gG=\{\mathbf{e}\}$ model, which is equivalent to a standard Vision Transformer). This suggests that even for general-purpose vision tasks without an end-to-end equivariance requirement, equivariance proves to be an important inductive bias. This may be explained by the fact that even though images have a canonical pose (e.g. with the sky at the top), equivariance allows for internal weight-sharing and thus the reuse of patterns (edges, parts, objects) that may appear at arbitrary orientations within an image. \rednote{More concretely, the strongest-performing configurations in both regimes are also the most parameter-efficient ones: $D_3$ achieves the best attention accuracy, while $C_6$ achieves the best convolutional accuracy, and both use only $14.2$M parameters compared to $85.1$M for the $\gG=\{\mathbf{e}\}$ baseline. This supports the view that the gain is not merely due to increased capacity, but rather to the symmetry-induced weight sharing itself. Interestingly, the largest tested symmetry groups provide the best results on this non-equivariant task, suggesting that stronger geometric tying can act as an effective regularizer even when the final prediction is not required to be equivariant. Finally, the comparison between the full attention and linear-convolutional variants shows a significant impact of attention over the linear-complexity dynamic convolution counterpart. Averaged over all groups in Table~\ref{tab:cifar10_results}, attention improves accuracy by $5.31$ percentage points over the convolutional variant, in which the softmax is omitted (cf.~Prop.~\ref{prop:dynamic_convolution}).}

\begin{table}[t]
\centering
\caption{ScanObjectNN Overall Acc.\ (\%).}
\label{tab:scanobjectnn_results}
\resizebox{0.75\columnwidth}{!}{%
  \begin{tabular}{lcc}
    \toprule
    \textbf{Group} & \textbf{Attention} & \textbf{Conv} \\
                   & Acc. ($\uparrow$)   & Acc. ($\uparrow$) \\
    \midrule
    $\{\mathbf{e}\}^*$ & $83.68_{\pm0.47}$ & $77.46_{\pm0.51}$ \\
    Flop          & $84.18_{\pm0.34}$ & $78.74_{\pm0.54}$ \\
    Tetrahedron   & $\textbf{86.11}_{\pm0.28}$ & $\textbf{79.00}_{\pm0.59}$ \\
    \bottomrule
  \end{tabular}%
}

{\footnotesize\emph{${}^*$Platonic Transf.\ w.\ group $\{\mathbf{e}\}$ is a ViT w.\ RoPE}}
\vspace{-1.5ex}
\end{table}

\begin{table*}[t]
\centering
\caption{PoseBusters pass rates (\%) on QM9 molecule generation.}
\label{tab:qm9_posebusters_results}
\resizebox{0.8\textwidth}{!}{%
  \begin{tabular}{lccccccc}
    \toprule
    \textbf{Test} & \textbf{Symphony} & \textbf{Eq. Diff.} & \textbf{ADiT} & \textbf{ZATOM-1} & \textbf{ZATOM-1-WD} & \textbf{DiP-$\{\mathbf{e}\}$} & \textbf{DiP-Tetra.} \\
    \midrule
    Atoms connected & 99.92 & 99.88 & 99.70 & 99.98 & \textbf{100} & \textbf{100} & \textbf{100} \\
    Bond angles & 99.56 & \textbf{99.98} & 99.85 & 99.95 & 99.91 & 99.87 & 99.91 \\
    Bond lengths & 98.72 & \textbf{100} & 99.41 & 99.97 & 99.94 & 99.92 & 99.95 \\
    Aromatic ring flat & \textbf{100} & \textbf{100} & \textbf{100} & \textbf{100} & \textbf{100} & \textbf{100} & \textbf{100} \\
    Double bond flat & 99.07 & 98.58 & 99.98 & 99.99 & \textbf{100} & \textbf{100} & \textbf{100} \\
    Internal energy & 95.65 & 94.88 & 95.86 & 99.78 & 99.79 & 99.87 & \textbf{99.89} \\
    No steric clash & 98.16 & 99.79 & 99.79 & 99.81 & 99.84 & \textbf{100} & \textbf{100} \\
    \bottomrule
  \end{tabular}%
}
\vspace{-2ex}
\end{table*}

\vspace{-1ex}
\begin{figure}[h]
\centering
\includegraphics[width=\columnwidth]{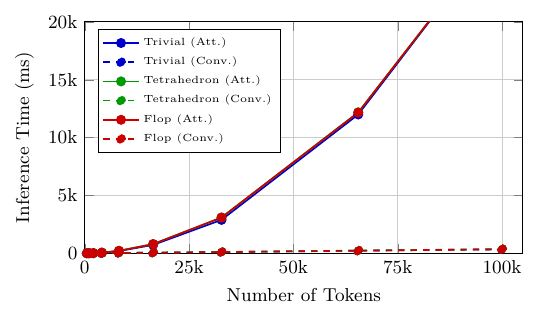}
\caption{In convolutional mode, the Platonic Transformer scales linearly with sequence length matching its attention mode.}
\label{fig:all_combined_wrap}
\vspace{-2ex}
\end{figure}


\paragraph{ScanObjectNN}
On the ScanObjectNN point cloud classification task, we test the effectiveness of 3D symmetry groups ($\{\mathbf{e}\}$, Flop, and Tetrahedron) in a realistic setting with occlusions and significant orientation variability. \rednote{ Similar to CIFAR-10, this is not a strictly equivariant task, yet the results in Table~\ref{tab:scanobjectnn_results} again highlight the impact of equivariance and weight sharing. Tetrahedron achieves the best performance for both attention and convolution, improving over the $\{\mathbf{e}\}$ baseline by $2.43$ and $1.54$ percent, respectively. This again suggests that stronger symmetry-induced weight sharing is beneficial even when the final classification task does not require end-to-end equivariance. Consistent with the CIFAR-10 results, we again observe the superiority of attention over its linear-convolutional counterpart, with an average improvement of $6.26$ percentage points across all groups. Nevertheless, the linear-time convolutional variant provides a significant speed-up, which can be critical for efficiently processing large point clouds. This demonstrates the versatility of our approach in adapting to different computational and modeling requirements in 3D computer vision, as demonstrated in Figure~\ref{fig:all_combined_wrap}. Also note that the computational cost is independent of the chosen symmetry group.}

\begin{table}[thb]
\centering
\caption{Molecule generation results on QM9.}
\label{tab:molecule_gen}
\resizebox{\columnwidth}{!}{%
  \begin{tabular}{lcccc}
    \toprule
    \textbf{Group} & \textbf{Validity \% ($\uparrow$)} & \textbf{Unique \% ($\uparrow$)} & \textbf{Mol. Stab. ($\uparrow$)} & \textbf{Atom Stab. ($\uparrow$)} \\
    \midrule
    DiP-$\{\mathbf{e}\}$ & 97.09 & 96.56 & 91.54 & 99.29 \\
    DiP-Tetrahedron & \textbf{98.43} & 97.02 & \textbf{95.26} & \textbf{99.56} \\
    \midrule
    \textbf{Reference Method} & \multicolumn{3}{r}{\textit{results from cited works}} \\
    QM9-only ZATOM-1 [\citenum{zatom2026}] & 92.88 & 97.71 & -- & - \\
    Jointly trained ADiT [\citenum{joshi2025adit}] & 94.45 & 97.82 & -- & - \\
    QM9-only ADiT [\citenum{joshi2025adit}] & 92.19 & 97.90 & -- & - \\
    Symphony [\citenum{daigavane2024symphony}] & 83.50 & 97.98 & 83.50 & - \\
    GeoLDM [\citenum{xu2023geometric}] & 93.80 & \textbf{98.82} & 89.4 & 98.9 \\
    EDM [\citenum{hoogeboom2022edm}] & 91.90 & 98.69 & 82.0 & 98.7 \\
    \bottomrule
  \end{tabular}%
}
\vspace{-2ex}
\end{table}

\subsection{Equivariant Tasks}


\textbf{Molecule generation on QM9.} We replace the standard Transformer in the DiffusionTransformer (DiT) \citep{peebles2023scalable} with our Platonic Transformer, which we call DiP, and follow the training regimen from \cite{joshi2025adit}. We operate at an all-atom resolution, including explicit Hydrogen atoms. \rednote{We use evaluation procedure detailed in \cite{joshi2025adit} for RDKit-based validity and uniqueness, while molecule and atom stability are computed using the EDM validation pipeline \citep{hoogeboom2022edm}.}

We present results in Table \ref{tab:molecule_gen}. DiP is at least on par with jointly-trained and QM9-only variants of ADiT \citep{joshi2025adit} in terms of validity, with DiP-Tetrahedron achieving the highest validity among the compared methods. It also outperforms other baselines like GeoLDM \citep{xu2023geometric}, Equivariant Diffusion (EDM) \citep{hoogeboom2022edm}, and Symphony \cite{daigavane2024symphony} in validity, molecular stability, and atom stability. 

\rednote{In Table~\ref{tab:qm9_posebusters_results}, we additionally report PoseBusters \citep{buttenschoen2023posebusters} sanity-check pass rates for QM9 molecule generation. The table reports pass rates on seven sanity checks for 10,000 sampled molecules, and all models explicitly generate hydrogen atoms. Both DiP variants achieve near-saturated pass rates across all checks, including perfect scores on atom connectivity, aromatic ring flatness, double-bond flatness, and steric-clash checks. This shows that the generated molecules are not only valid under standard metrics, but also geometrically plausible under stricter structural checks.}

Our results present a nuanced view of equivariance in generative diffusion. While the trivial (DiP-$\{\mathbf{e}\}$) and equivariant (DiP-Tetrahedron) models achieve close validity scores, the equivariant variant yields significantly higher molecular stability.\rednote{ This suggests that standard validity metrics may be overly lenient, whereas stability, which accounts for proximity, valency, and charge, offers a more rigorous measure of physical plausibility. The PoseBusters results provide a complementary sanity check, showing that these gains in stability do not come at the cost of basic 3D geometric consistency.} Furthermore, the equivariant model exhibited superior training stability at large hidden dimensions and converged faster, consistent with the learning dynamics observed by \citet{vadgama2025probing}.

Notably, the validity gap between DiP-$\{\mathbf{e}\}$ and DiP-Tetrahedron is modest, whereas the convolution-based framework in \cite{vadgama2025probing} showed a pronounced 11.5\% difference between non-equivariant and equivariant variants (We provide an ablation on the effects of symmetry group size in Appendix~\ref{ablation_group_vs_head}.).This suggests Transformers may more effectively learn equivariant tasks from unconstrained parameterizations, particularly at scale. Since performance gaps often diminish with increased model size, a large-scale architecture may explain this similarity. While the Platonic Transformer demonstrates strong generative capabilities, further investigation is needed to characterize the molecular scaling laws governing its data and efficiency.

\textbf{Property prediction on QM9.} We evaluate on property prediction on QM9 in Appendix~\ref{app:regression_qm9}, obtaining competitive results for this task as well.

\begin{table}[t]
    \centering
    \caption{Performance on OMol25 4M structure-to-force/energy prediction. We report parameters, training throughput (atoms/sec), and validation MAE for forces, total energy, and energy per atom.}
    \label{tab:omol_scaling_runtime_main}
    \resizebox{\columnwidth}{!}{%
    \begin{tabular}{lcccccc}
    \toprule
    \textbf{Model} & \textbf{Epochs} & \textbf{Params} & \textbf{Throughput} & \textbf{Force} & \textbf{Energy} & \textbf{E/Atom} \\
    \midrule
    Tetrahedron & 20 & 60M & 33,878 & 11.15 & 83.02 & 1.41 \\
    Octahedron  &  20  & 30M & \textbf{34,023} & 12.28 & 94.61 & 1.62 \\
    Octahedron  &  20  & 60M & 18,121 & 10.51 & 82.57 & 1.42 \\
    eSEN-sm[\citenum{levine2025openmolecules2025omol25}]     & 20   & 6M  & 12,834 & 13.11 & 153.54 & 2.03 \\
    \midrule
    Tetrahedron & 80 & 60M & 33,878 & \textbf{9.33} & \textbf{63.84} & \textbf{1.12} \\
    eSEN-sm[\citenum{levine2025openmolecules2025omol25}]     &   80 & 6M  & 12,834  &  12.64     &  107.78     &   1.66   \\
    \bottomrule
    \end{tabular}%
    }
    \vspace{-3.1ex}
\end{table}

\paragraph{OMol25}
\rednote{To validate the scalability and performance of our proposed architecture, we evaluate our model with the best hyperparameters on the large-scale OMol25 dataset. We train on the OMol25 4M training set and report errors on the validation set. Table~\ref{tab:omol_scaling_runtime_main} compares model size and validation errors under 20 and 80-epoch settings using a cosine annealing learning rate schedule. Throughput is measured for forward and backward passes on real OMol dynamic batches. We report force MAE, total energy MAE, and energy-per-atom MAE.

The 20-epoch results show that both 60M-parameter Platonic Transformer variants outperform the eSEN-sm baseline in accuracy while also being substantially faster. In particular, the Tetrahedron model improves force MAE from $13.11$ to $11.15$ and energy MAE from $153.54$ to $83.02$, while achieving a $2.6\times$ higher throughput. Increasing the symmetry group from Tetrahedron to Octahedron reduces the parameter count from 60M to 30M through stronger weight sharing, while keeping throughput essentially unchanged. This comes with a small reduction in performance, which may be due to the reduced number of parameters rather than the larger symmetry group itself. To separate the effect of symmetry resolution from model capacity, we therefore also scale the Octahedron model back to 60M parameters. This improves force MAE to $10.51$, the best 20-epoch force result in the table, although at the expected cost of lower throughput.

Following the common OMol25 evaluation setting, we also train the Tetrahedron model and eSEN-sm from scratch for 80 epochs. The eSEN-sm results are consistent with, and slightly stronger than, comparable reports under a similar training recipe in  \citep{qu2026recipe,levine2025openmolecules2025omol25}, which confirms that our implementation and optimization setup are reliable. Under the same setting, the Tetrahedron model substantially outperforms eSEN-sm across all error metrics, reducing force MAE from $12.64$ to $9.33$, energy MAE from $107.78$ to $63.84$, and energy-per-atom MAE from $1.66$ to $1.12$, while maintaining a $2.6\times$ higher throughput. Taken together, these results show that Platonic Transformers, despite simply being equivariance-constrained vanilla Transformers, form a scalable and efficient class of models for interatomic potentials, outperforming a state-of-the-art architecture specifically tailored to molecular force and energy prediction.}

\begin{table}[t]
    \centering
    \caption{ProteinMD Force MSE ($\downarrow$).}
    \vspace{-5pt}
    \label{tab:proteinmd_results}
    \resizebox{0.95\columnwidth}{!}{%
    \begin{tabular}{lcc}
    \toprule
    \textbf{Group} & \textbf{Backbone level} & \textbf{Atom level} \\
    \midrule
    $\{\mathbf{e}\}$ & $2.11_{\pm0.030}$ & $2.60_{\pm0.019}$ \\
    Tetrahedron & $1.80_{\pm0.004}$ & $2.36_{\pm0.011}$ \\
    \midrule
    \textbf{Reference Method} & \multicolumn{2}{c}{$^*$\textit{results from Moskalev et al. [\citenum{moskalev2025geometrichyenanetworkslargescale}]}}\\
    EGNN [\citenum{satorras2021n}]* & $2.25_{\pm0.001}$ & $2.72_{\pm0.003}$ \\
    FastEGNN [\citenum{pmlr-v235-zhang24f}]* & $1.84_{\pm0.002}$ & -- \\
    G-Transformer {{[\citenum{moskalev2025geometrichyenanetworkslargescale}]}}* & $2.45_{\pm0.037}$ & $3.67_{\pm0.640}$ \\
    G-Hyena {{[\citenum{moskalev2025geometrichyenanetworkslargescale}]}}* & $1.80_{\pm0.009}$ & $2.49_{\pm0.037}$ \\
    \bottomrule
    \end{tabular}}
    \vspace{-3ex}
\end{table}

\paragraph{ProteinMD} We evaluate our model on molecular dynamics dataset processed from MDAnalysis \cite{han2022equivariantgraphhierarchybasedneural}.It models the equilibrium-time evolution of protein structures, where atom interactions depend on both local and long-range geometry. Data is processed with MDAnalysis from an AdK equilibrium MD trajectory \cite{Seyler2017} yielding 4,186 protein structures with trajectories. We report results on backbone (855 atoms) and all-atom (3,341 atoms) variants.
The results are presented in Table~\ref{tab:proteinmd_results}, where we see that we outperform the previous state-of-the-art G-Hyena~\cite{moskalev2025geometrichyenanetworkslargescale} as well as their equivariant transformer baseline.

\rednote{
\paragraph{Limitations}
While we have evaluated Platonic Transformers on a broad range of experiments, the performance at extreme scale remains uncharacterized.
Further, we only guarantee equivariance w.r.t.\ subgroups of $O(n)$, in specific applications full equivariance may be preferred.
}


\section{Conclusion}
We introduce the Platonic Transformer, a framework that achieves approximate $E(n)$ equivariance without compromising the flexibility and scalability of the standard Transformer architecture. By combining Rotary Position Embeddings (RoPE) with a new frame-dependent attention mechanism---where attention is computed relative to reference frames from Platonic solid symmetry groups---we integrate a powerful geometric inductive bias while preserving the original computation graph and cost. This approach demonstrates that principled equivariance and modern scalability are not mutually exclusive. Furthermore, our analysis reveals a formal equivalence to dynamic group convolution with linear complexity, enabling a highly scalable, linear-time variant for large-scale tasks. In many scientific domains, equivariance represents a ``Platonic ideal'' --- an essential physical principle a model should respect. By eliminating the trade-off between this principled design and computational efficiency, the Platonic Transformer makes this ideal a practical and scalable reality.




\newpage
\section*{Impact Statement}
\rednote{The primary potential negative impacts are those common to advanced generative modeling and scientific ML, namely the risk of dual-use (e.g., repurposing molecular generation for harmful compounds) and the environmental cost of large-scale training on datasets like OMol25.}


\bibliography{reference}
\bibliographystyle{icml2026}

\newpage
\appendix
\onecolumn
\clearpage
\thispagestyle{empty}
\begingroup
\hypersetup{linkcolor=mycustomcolor}

\begin{center}
\vspace*{1.5em}
\rule{0.78\textwidth}{1.2pt}

\vspace{1.6em}
{\Large\bfseries Appendix Contents}

\vspace{1.2em}
\rule{0.78\textwidth}{0.4pt}
\end{center}

\vspace{0.8em}

\fontsize{11}{14}\selectfont
\noindent\hspace*{0.08\textwidth}%
\begin{tabular*}{0.84\textwidth}{@{\extracolsep{\fill}} l l r}
\textbf{A} & \hyperref[app:rope_derivation]{Rotary Position Embeddings from a Group Theoretical Perspective} & \pageref{app:rope_derivation} \\[0.75em]
\textbf{B} & \hyperref[app:equivariance_properties]{Equivariance Properties of Platonic Transformers} & \pageref{app:equivariance_properties} \\[0.75em]
\textbf{C} & \hyperref[app:proofs]{Proofs} & \pageref{app:proofs} \\[0.75em]
\textbf{D} & \hyperref[app:rope_frequency_steering]{Equivalent Attention via RoPE Base Frequency Steering} & \pageref{app:rope_frequency_steering} \\[0.75em]
\textbf{E} & \hyperref[app:architecture_details]{Details of Architecture} & \pageref{app:architecture_details} \\[0.75em]
\textbf{F} & \hyperref[app:hyperparam_tuning_model_selection]{Hyperparameter Tuning and Model Selection Strategy} & \pageref{app:hyperparam_tuning_model_selection} \\[0.75em]
\textbf{G} & \hyperref[app:exp_details_imagenet1k]{Additional Results on ImageNet-1K} & \pageref{app:exp_details_imagenet1k} \\[0.75em]
\textbf{H} & \hyperref[app:regression_qm9]{Regression Experiments on QM9} & \pageref{app:regression_qm9} \\[0.75em]
\textbf{I} & \hyperref[app:exp_details_cifar]{Details of Experiments on CIFAR-10} & \pageref{app:exp_details_cifar} \\[0.75em]
\textbf{J} & \hyperref[app:exp_details_scanobj]{Details of Experiments on ScanObjectNN} & \pageref{app:exp_details_scanobj} \\[0.75em]
\textbf{K} & \hyperref[app:exp_details_omol]{Details of Experiments on OMol25} & \pageref{app:exp_details_omol} \\[0.75em]
\textbf{L} & \hyperref[app:learned_keys_ablation]{Details of Experiments on ProtienMD} & \pageref{app:exp_details_proteinmd} \\[0.75em]
\textbf{M} & \hyperref[app:learned_keys_ablation]{Optimization Sensitivity of Learned Key Projections} & \pageref{app:learned_keys_ablation} \\[0.75em]
\textbf{N} & \hyperref[app:further_ablations]{Further Ablations and Analysis} & \pageref{app:further_ablations} \\[0.75em]
\textbf{O} & \hyperref[app:fourier_implementation]{Implementing Platonic Transformers in the Fourier Domain of Finite Groups} & \pageref{app:fourier_implementation}
\end{tabular*}

\vspace{1.2em}

\begin{center}
\rule{0.78\textwidth}{0.4pt}
\end{center}

\endgroup
\clearpage

\section{Rotary Position Embeddings from a Group Theoretical Perspective}
\label{app:rope_derivation}

A fundamental challenge in geometric deep learning is creating position representations that respect underlying symmetries. For data in $\R^d$, our goal is to define a high-dimensional position embedding, $\mathbf{E}: \R^d \to \R^{d'}$, that is equivariant to translations. This requires that for any translation vector $\mathbf{p}$, the embedding transforms predictably: $\mathbf{E}(\mathbf{p}_0+\mathbf{p}) = \bm{\rho}(\mathbf{p})\mathbf{E}(\mathbf{p}_0)$, where $\bm{\rho}(\mathbf{p})$ is a linear transformation. Group representation theory provides the formal tools to construct such embeddings.

\subsection{The Theoretical Toolkit}
\label{app:theory_background}
To proceed, we first define the essential concepts required for our construction.

\begin{definition}[Representation]
A linear representation of a group $\gG$ on a vector space $V$ is a group homomorphism $\rho: \gG \to \text{GL}(V)$, where $\text{GL}(V)$ is the general linear group of invertible linear transformations on $V$.
\end{definition}

To ensure that the positional encoding does not arbitrarily amplify or diminish feature magnitudes, which would destabilize learning, we require the representations to be length-preserving. This leads to the concept of a unitary representation.

\begin{definition}[Unitary Representation]
A representation $\rho$ is \textbf{unitary} if it maps group elements to unitary operators, i.e., $\rho: \gG \to \text{U}(V)$. For real-valued representations, this corresponds to orthogonality, $\rho(g)^{-1} = \rho(g)^\top$.
\end{definition}

Just as a complex signal can be decomposed into pure frequencies, a general representation can be broken down into fundamental building blocks known as irreducible representations (irreps).

\begin{definition}[Irreducible Representation]
An \textbf{irreducible representation} (irrep) is a representation acting on a vector space $V$ that has no non-trivial invariant subspaces.
\end{definition}

\subsection{Constructing the RoPE Operator}
With these formal tools, we can now build the RoPE operator. The irreps of the translation group $(\R^d, +)$ are indexed by a frequency vector $\bm{\omega} \in \R^d$ and are given by one-dimensional, unitary representations:
\begin{equation}
    \rho_{\bm{\omega}}(\mathbf{p}) = e^{i\bm{\omega}^\top\mathbf{p}}.
\end{equation}
This exponential form is the unique continuous solution to the group's homomorphism property, $\rho(\mathbf{p}_1 + \mathbf{p}_2) = \rho(\mathbf{p}_1)\rho(\mathbf{p}_2)$, where the imaginary exponent ensures unitarity.

However, neural networks typically operate on real numbers. We can obtain a real-valued irrep by combining pairs of conjugate frequencies, $\bm{\omega}_k$ and $-\bm{\omega}_k$. This yields a 2D irreducible representation that takes the familiar form of a rotation matrix:
\begin{equation}
    \rho_{\bm{\omega}_k}(\mathbf{p}) =
    \begin{pmatrix}
        \cos(\bm{\omega}_k^\top\mathbf{p}) & -\sin(\bm{\omega}_k^\top\mathbf{p}) \\
        \sin(\bm{\omega}_k^\top\mathbf{p}) & \cos(\bm{\omega}_k^\top\mathbf{p})
    \end{pmatrix}.
\end{equation}
To create a high-dimensional embedding, we simply select a set of frequencies $\Omega = \{\bm{\omega}_k\}_{k=1}^{d'/2}$ and stack these 2D rotation blocks along the diagonal of a larger matrix. This results in a single, block-diagonal transformation that correctly and equivariantly updates the entire embedding for a given translation $\mathbf{p}$:
\begin{equation}\label{eq:app_rope_diag}
    \bm{\rho}_{\Omega}(\mathbf{p}) = \text{diag}(\rho^{\bm{\omega}_1}(\mathbf{p}), \dots, \rho_{\bm{\omega}_{d'/2}}(\mathbf{p})).
\end{equation}
This is the core mechanism behind Rotary Position Embeddings. Its structure guarantees both equivariance and computational efficiency, as each 2D component can be rotated independently.

\begin{definition}[Rotary Position Embedding (RoPE) Operator]
The RoPE operator $\bm{\rho}_\Omega(\mathbf{p})$ for a position $\mathbf{p} \in \mathbb{R}^d$ is the block-diagonal rotation matrix defined above (Equation~\ref{eq:app_rope_diag}), constructed from a set of frequencies $\Omega$. The application of RoPE to a feature vector $\mathbf{f} \in \mathbb{R}^{d'}$ is defined as the matrix-vector product: $\bm{\rho}_\Omega(\mathbf{p}) \mathbf{f}$. For this operation to be well-defined, the feature dimension $d'$ must be even.
\end{definition}

\subsection{Translation Invariance in Attention}
While the RoPE operator provides an \textit{equivariant} transformation for feature vectors, its crucial benefit within the Transformer architecture is that it makes the attention score \textit{invariant} to global translations. This property ensures that the attention mechanism only considers the relative positions of tokens, which is the inductive bias we seek. We formalize this key result below.

\begin{proposition}[Translation Invariance of the RoPE Attention Score]
The attention score computed using RoPE, $\operatorname{attn}(\mathbf{q}, \mathbf{k}, \Delta\mathbf{p}) = \mathbf{q}^\top \bm{\rho}_\Omega(\Delta\mathbf{p}) \mathbf{k}$, where $\Delta\mathbf{p} = \mathbf{p}_j - \mathbf{p}_i$, is invariant to a global translation of the coordinate system.
\end{proposition}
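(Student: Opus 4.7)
The plan is to establish the invariance in two complementary ways. First, I would verify it directly from the stated form of the score, which makes the result almost immediate. Second, and more informatively, I would trace it back to the per-token application $\mathbf{q}_i' = \bm{\rho}_\Omega(\mathbf{p}_i)\mathbf{q}$ and $\mathbf{k}_j' = \bm{\rho}_\Omega(\mathbf{p}_j)\mathbf{k}$, showing how the algebraic structure of $\bm{\rho}_\Omega$ forces the translation factor to cancel. This is essentially the derivation of Equation~\ref{eq:rope} carried out with an extra global translation $\mathbf{t}$ inserted.

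For the first step I would let $\mathbf{t} \in \R^d$ denote a global translation, under which $\mathbf{p}_i \mapsto \mathbf{p}_i + \mathbf{t}$ and $\mathbf{p}_j \mapsto \mathbf{p}_j + \mathbf{t}$. The relative displacement satisfies $(\mathbf{p}_j + \mathbf{t}) - (\mathbf{p}_i + \mathbf{t}) = \Delta\mathbf{p}$, and therefore substitution into $\mathbf{q}^\top \bm{\rho}_\Omega(\Delta\mathbf{p})\mathbf{k}$ immediately yields a value independent of $\mathbf{t}$.

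For the second step I would begin from the per-token form $(\bm{\rho}_\Omega(\mathbf{p}_i + \mathbf{t})\mathbf{q})^\top (\bm{\rho}_\Omega(\mathbf{p}_j + \mathbf{t})\mathbf{k})$ and invoke the homomorphism property of $\bm{\rho}_\Omega$ for the abelian translation group to factor each RoPE transform as $\bm{\rho}_\Omega(\mathbf{p} + \mathbf{t}) = \bm{\rho}_\Omega(\mathbf{p})\bm{\rho}_\Omega(\mathbf{t})$. The orthogonality relation $\bm{\rho}_\Omega(\mathbf{t})^\top\bm{\rho}_\Omega(\mathbf{t}) = \mathbf{I}$, inherited from unitarity of each 2D irrep block, then cancels the common translation factor. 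A second use of the homomorphism and orthogonality collapses $\bm{\rho}_\Omega(\mathbf{p}_i)^\top \bm{\rho}_\Omega(\mathbf{p}_j)$ into $\bm{\rho}_\Omega(\mathbf{p}_j - \mathbf{p}_i) = \bm{\rho}_\Omega(\Delta\mathbf{p})$, recovering the stated score.

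There is no substantive obstacle; the proposition is a direct consequence of two properties already established for $\bm{\rho}_\Omega$ in Appendix~\ref{app:rope_derivation}, namely that it is a group homomorphism from $(\R^d, +)$ and that its image consists of orthogonal operators. It is worth emphasising that both ingredients are essential: without unitarity, the common translation factor would leave behind a residual scaling of the score, and without the abelian structure of the translation group the homomorphism alone would not permit the clean reordering that produces the cancellation.
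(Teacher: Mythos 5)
Your first step reproduces the paper's proof exactly: a global translation by $\mathbf{t}$ leaves $\Delta\mathbf{p} = \mathbf{p}_j - \mathbf{p}_i$ unchanged, so a score that depends on positions only through $\Delta\mathbf{p}$ is automatically invariant. Your second step is also correct but is not a different route to this proposition; it is the derivation of the score's form itself (the paper's Eq.~\ref{eq:rope}), which the proposition already takes as given, so you are re-deriving the premise using the homomorphism and orthogonality properties the paper establishes in the footnote and Appendix~\ref{app:rope_derivation} rather than offering an alternative proof of the invariance.
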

\begin{proof}
Let the positions $\mathbf{p}_i$ and $\mathbf{p}_j$ be translated by an arbitrary vector $\mathbf{t} \in \R^d$, resulting in new positions $\mathbf{p}'_i = \mathbf{p}_i + \mathbf{t}$ and $\mathbf{p}'_j = \mathbf{p}_j + \mathbf{t}$. The new relative displacement vector, $\Delta\mathbf{p}'$, is:
\begin{equation}
    \Delta\mathbf{p}' = \mathbf{p}'_j - \mathbf{p}'_i = (\mathbf{p}_j + \mathbf{t}) - (\mathbf{p}_i + \mathbf{t}) = \mathbf{p}_j - \mathbf{p}_i = \Delta\mathbf{p}.
\end{equation}
Since the relative displacement vector is unchanged by the global translation, the RoPE operator applied to it also remains unchanged: $\bm{\rho}_\Omega(\Delta\mathbf{p}') = \bm{\rho}_\Omega(\Delta\mathbf{p})$. Consequently, the attention score, which depends only on the content vectors and this operator, is invariant to the translation:
\begin{equation}
    \mathbf{q}^\top \bm{\rho}_\Omega(\Delta\mathbf{p}') \mathbf{k} = \mathbf{q}^\top \bm{\rho}_\Omega(\Delta\mathbf{p}) \mathbf{k}.
\end{equation}
This formally demonstrates that RoPE imparts translation invariance to the attention mechanism.
\end{proof}

\subsection{A Fourier Perspective}
The principle of constructing equivariant functions from irreducible representations is deeply connected to Fourier analysis. The Fourier transform provides a way to decompose any function on a group into a weighted sum (or integral) over its irreps. For the translation group on $\R^d$, these irreps are precisely the complex exponentials we used as our building blocks. Therefore, RoPE can be understood as a practical application of Fourier theory, using a discrete basis of Fourier modes (the chosen frequencies $\Omega$) to represent the positional signal.

\begin{definition}[Fourier Transform on \texorpdfstring{$\R^d$}{R^d}]
The forward Fourier transform $\F: L^2(\R^d) \to L^2(\R^d)$ maps a function $f$ to its frequency-space representation $\hat{f}$. The coefficient for a frequency $\bm{\omega}$ is the projection of $f$ onto the corresponding irrep $\rho_{\bm{\omega}}$:
\begin{equation}
    \hat{f}(\bm{\omega}) = \F\{f\}(\bm{\omega}) = \int_{\R^d} f(\mathbf{p}) \overline{\rho_{\bm{\omega}}(\mathbf{p})} \,d\mathbf{p} = \int_{\R^d} f(\mathbf{p}) e^{-i\bm{\omega}^\top\mathbf{p}} \,d\mathbf{p}.
\end{equation}
The inverse transform reconstructs the function by integrating over all irreps:
\begin{equation}
    f(\mathbf{p}) = \F^{-1}\{\hat{f}\}(\mathbf{p}) = \frac{1}{(2\pi)^d} \int_{\R^d} \hat{f}(\bm{\omega}) \rho_{\bm{\omega}}(\mathbf{p}) \,d\bm{\omega} = \frac{1}{(2\pi)^d} \int_{\R^d} \hat{f}(\bm{\omega}) e^{i\bm{\omega}^\top\mathbf{p}} \,d\bm{\omega}.
\end{equation}
\end{definition}



\section{Equivariance Properties of Platonic Transformers}
\label{app:equivariance_properties}

We formally establish the equivariance of our proposed architecture. We consider a point cloud $\{ \mathbf{p}_i, \mathbf{v}_i, s_i \}_{i=1}^N$ consisting of positions, vectors, and scalars. A global rotation $R \in \gG$ acts on these inputs as $\mathbf{p}_i \mapsto R\mathbf{p}_i$, $\mathbf{v}_i \mapsto R\mathbf{v}_i$, and $s_i \mapsto s_i$.

\paragraph{Equivariant Feature Lifting.}
Input features are first lifted to functions on the group $\gG$. The lifting operator, $\operatorname{Lift}$, maps the input point cloud to a set of feature maps $\{ \mathbf{f}_i: \gG \to \mathbb{R}^C \}_{i=1}^N$. Scalar components are copied to each frame, while vector components (from $\mathbf{p}_i, \mathbf{v}_i$) are lifted by projecting them onto each reference frame. This projection means expressing the vector's coordinates in the local basis of a given frame $\tilde{R} \in \gG$, which is achieved by the transformation $\tilde{R}^{-1}\mathbf{v}$. This lifting procedure is equivariant by construction: a global rotation $R$ of the input point cloud results in the lifted feature maps transforming via the left regular representation, $L_R$. That is:
\begin{equation}
    (\operatorname{Lift}(R \cdot \text{cloud}))_i(\tilde{R}) = (\operatorname{Lift}(\text{cloud}))_i(R^{-1}\tilde{R}) \triangleq (L_R \mathbf{f}_i)(\tilde{R}).
\end{equation}

\paragraph{Equivariant Linear Layers.}
All linear layers $\Phi$ in our network are implemented as point-wise group convolutions, as shown in Eq.~\ref{eq:gconv}. These layers are equivariant to the action of the group by construction~\citep[Thm. 3.1]{cohen2019general}, satisfying $\Phi(L_R \mathbf{f}_i) = L_R (\Phi(\mathbf{f}_i))$.

This leads to our main proposition regarding the attention mechanism.

\begin{proposition}[Equivariant Attention]
\label{prop:attention}
Let the queries $Q_i$, keys $K_i$, and values $V_i$ be equivariant feature maps produced by the equivariant linear layers. The RoPE-enhanced attention mechanism (Eq.~\ref{eq:plato_attn}), which computes outputs $\mathbf{y}_i$, is an equivariant operation. That is, if the inputs transform as $\mathbf{f}_i \mapsto L_R \mathbf{f}_i$, the outputs transform as $\mathbf{y}_i \mapsto L_R \mathbf{y}_i$.
\end{proposition}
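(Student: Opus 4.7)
The plan is to verify equivariance factor by factor: show that each building block of Eq.~\ref{eq:plato_attn}, the frame-projected positions, the RoPE operator, the raw scores $s_{ij}$, the softmax, and finally the value aggregation, commutes with the left regular action $L_R$. Throughout, I use the defining identity $(L_R h)(\tilde{R}) = h(R^{-1}\tilde{R})$ for any group-valued map $h$, and I rely on two facts established earlier: (i) the lifting procedure is equivariant, so a global rotation of the cloud induces the transformation $\mathbf{q}_i, \mathbf{k}_j, \mathbf{v}_j \mapsto L_R\mathbf{q}_i, L_R\mathbf{k}_j, L_R\mathbf{v}_j$, and (ii) the linear projections $\mathbf{W}^Q, \mathbf{W}^K, \mathbf{W}^V$ are themselves equivariant (Section~\ref{sec:eq_layer_fixed_graph}), so this transformation indeed propagates through the query/key/value projections.

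First I would handle the frame-projected positions. Under $\mathbf{p}_i \mapsto R\mathbf{p}_i$, the projection onto frame $\tilde{R}$ becomes
\begin{equation*}
    \mathbf{p}_i'(\tilde{R}) \;=\; \tilde{R}^{-1}R\mathbf{p}_i \;=\; (R^{-1}\tilde{R})^{-1}\mathbf{p}_i \;=\; \mathbf{p}_i(R^{-1}\tilde{R}),
\end{equation*}
so relative displacements obey $(\mathbf{p}_j-\mathbf{p}_i)'(\tilde{R}) = (\mathbf{p}_j-\mathbf{p}_i)(R^{-1}\tilde{R})$. This is the critical algebraic identity: steering the cloud by $R$ acts on the group axis exactly as the left regular representation. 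No property of $\bm{\rho}_\Omega$ is needed here; the equivariance is already encoded in the relative coordinate system.

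Next I would string these identities together inside the score $s_{ij}(\tilde{R})$. Substituting the transformed queries $\mathbf{q}_i'(\tilde{R}) = \mathbf{q}_i(R^{-1}\tilde{R})$, the analogous expression for keys, and the position identity above into the definition of $s_{ij}(\tilde{R})$ gives
\begin{equation*}
    s_{ij}'(\tilde{R}) \;=\; \mathbf{q}_i(R^{-1}\tilde{R})^\top \bm{\rho}_\Omega\bigl((\mathbf{p}_j-\mathbf{p}_i)(R^{-1}\tilde{R})\bigr)\,\mathbf{k}_j(R^{-1}\tilde{R}) \;=\; s_{ij}(R^{-1}\tilde{R}) \;=\; (L_R s_{ij})(\tilde{R}).
\end{equation*}
Because the softmax in Eq.~\ref{eq:plato_attn} normalises over the index $j$ only, and acts pointwise in $(i,\tilde{R})$, it commutes with the permutation of frames induced by $L_R$. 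Hence $\operatorname{attn}_{ij}'(\tilde{R}) = \operatorname{attn}_{ij}(R^{-1}\tilde{R})$. Finally, combining with the transformed values yields
\begin{equation*}
    \mathbf{y}_i'(\tilde{R}) \;=\; \sum_{j=1}^N \operatorname{attn}_{ij}(R^{-1}\tilde{R})\,\mathbf{v}_j(R^{-1}\tilde{R}) \;=\; \mathbf{y}_i(R^{-1}\tilde{R}) \;=\; (L_R\mathbf{y}_i)(\tilde{R}),
\end{equation*}
which is the desired conclusion.

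The only genuine obstacle is the first step: keeping the bookkeeping straight between (a) the global rotation $R$ acting on cloud positions, (b) the local projection $\tilde{R}^{-1}$ associated with a frame, and (c) the left regular action on group-valued maps. The key identity $\tilde{R}^{-1}R = (R^{-1}\tilde{R})^{-1}$ cleanly aligns (a)+(b) with (c) and makes the rest of the argument mechanical. No additional property of the RoPE operator beyond what is already used to establish Eq.~\ref{eq:rope} is required; in particular, neither the homomorphism property nor orthogonality of $\bm{\rho}_\Omega$ is invoked, because the equivariance lives entirely in how the positional arguments are steered by the group action.
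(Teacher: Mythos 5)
Your proof is correct and follows essentially the same route as the paper's: propagate the left regular action $L_R$ through the score function, then through the softmax (which acts pointwise in the frame index and only normalizes over $j$), and finally through the value aggregation. The one place where you are more explicit than the paper is the intermediate step showing $\mathbf{p}_i'(\tilde{R}) = \tilde{R}^{-1}R\mathbf{p}_i = \mathbf{p}_i(R^{-1}\tilde{R})$, which the paper leaves implicit with the parenthetical ``(and potentially RoPE terms derived from lifted positions)''; spelling out this algebraic identity is a genuine clarification but not a different argument.
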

\begin{proof}
Let the inputs to the attention layer ($Q_i, K_i, V_i$) transform under a global rotation $R$ as $Q'_i = L_R Q_i$, $K'_i = L_R K_i$, and $V'_i = L_R V_i$. We analyze the transformation of each component of the attention calculation.

The score function $s_{ij}(\tilde{R})$, which depends on $Q_i(\tilde{R})$ and $K_j(\tilde{R})$ (and potentially RoPE terms derived from lifted positions), will transform as:
\begin{align*}
    s'_{ij}(\tilde{R}) = \text{score}(Q'_i(\tilde{R}), K'_i(\tilde{R}), \dots) = \text{score}(Q_i(R^{-1}\tilde{R}), K_j(R^{-1}\tilde{R}), \dots) = s_{ij}(R^{-1}\tilde{R}).
\end{align*}
This means the score function itself is equivariant, $s'_{ij} = L_R s_{ij}$. Since the softmax operator is applied point-wise for each frame $\tilde{R}$ over the index $j$, the attention weights also transform equivariantly:
\begin{equation*}
    \operatorname{attn}'_{ij}(\tilde{R}) = \underset{j}{\operatorname{softmax}}(s'_{ij}(\tilde{R})) = \underset{j}{\operatorname{softmax}}(s_{ij}(R^{-1}\tilde{R})) = \operatorname{attn}_{ij}(R^{-1}\tilde{R}).
\end{equation*}
Finally, the output feature map $\mathbf{y}_i$ transforms as:
\begin{align*}
    \mathbf{y}'_i(\tilde{R}) &= \sum_{j=1}^N \operatorname{attn}'_{ij}(\tilde{R}) V'_j(\tilde{R}) \\
    &= \sum_{j=1}^N \operatorname{attn}_{ij}(R^{-1}\tilde{R}) V_j(R^{-1}\tilde{R}) = \mathbf{y}_i(R^{-1}\tilde{R}).
\end{align*}
Thus, the output transforms as $\mathbf{y}'_i = L_R \mathbf{y}_i$, proving the attention mechanism is equivariant.
\end{proof}

\section{Proofs}
\label{app:proofs}

\subsection{Proof of Proposition~\ref{prop:dynamic_convolution}}
\label{app:proof_prop_dynamic_conv}
We seek to show that the unnormalized attention score, which defines the kernel $\phi_{\mathbf{q}_i}(\Delta\mathbf{p})$, takes the form of a sparse Fourier series whose coefficients are linear projections of the query $\mathbf{q}_i$.

Let the relative position be $\Delta\mathbf{p} = \mathbf{p}_j - \mathbf{p}_i$. With a constant key vector $\mathbf{k}_j = \mathbf{1}$, the kernel is defined by the attention score:
$$
\phi_{\mathbf{q}_i}(\Delta\mathbf{p}) = (\bm{\rho}(\mathbf{p}_i)\mathbf{q}_i)^\top (\bm{\rho}(\mathbf{p}_j)\mathbf{1})
$$
Using the properties of the RoPE operator $\bm{\rho}$, this simplifies to:
$$
\phi_{\mathbf{q}_i}(\Delta\mathbf{p}) = \mathbf{q}_i^\top \bm{\rho}(\mathbf{p}_i)^\top \bm{\rho}(\mathbf{p}_j) \mathbf{1} = \mathbf{q}_i^\top \bm{\rho}(\Delta\mathbf{p}) \mathbf{1}
$$
The RoPE matrix $\bm{\rho}(\Delta\mathbf{p})$ is block-diagonal, consisting of $d/2$ independent 2D rotation blocks. We can therefore analyze the contribution from a single block $k$ and sum the results. Let $\theta_k = \bm{\omega}_k^\top \Delta\mathbf{p}$. The contribution from block $k$ is:
$$
\phi_k =
\begin{pmatrix} q_{2k-1} & q_{2k} \end{pmatrix}
\begin{pmatrix}
\cos(\theta_k) & -\sin(\theta_k) \\
\sin(\theta_k) & \cos(\theta_k)
\end{pmatrix}
\begin{pmatrix} 1 \\ 1 \end{pmatrix}
$$
Performing the matrix-vector multiplications, we get:
\begin{align*}
\phi_k &= \begin{pmatrix} q_{2k-1} & q_{2k} \end{pmatrix} \begin{pmatrix} \cos(\theta_k) - \sin(\theta_k) \\ \sin(\theta_k) + \cos(\theta_k) \end{pmatrix} \\
&= q_{2k-1}(\cos(\theta_k) - \sin(\theta_k)) + q_{2k}(\sin(\theta_k) + \cos(\theta_k))
\end{align*}
Grouping terms by $\cos(\theta_k)$ and $\sin(\theta_k)$ reveals the linear projections:
$$
\phi_k = \underbrace{(q_{2k-1} + q_{2k})}_{a_k(\mathbf{q}_i)} \cos(\theta_k) + \underbrace{(q_{2k} - q_{2k-1})}_{b_k(\mathbf{q}_i)} \sin(\theta_k)
$$
The Fourier coefficients $a_k(\mathbf{q}_i)$ and $b_k(\mathbf{q}_i)$ are thus simple linear combinations of the query vector's elements. Summing over all $k=1, \dots, d/2$ yields the complete kernel $\phi_{\mathbf{q}_i}(\Delta\mathbf{p})$, which has the exact form stated in the proposition.

\subsection{Proof of Corollary~\ref{cor:linearity}}
\label{app:proof_corollary_1}

The linear-time complexity is achieved by expressing the operation in matrix form and re-ordering the computation. Let $\mathbf{q}'_i = \bm{\rho}_\Omega(\mathbf{p}_i)\mathbf{q}_i$ and $\mathbf{k}'_j = \bm{\rho}_\Omega(\mathbf{p}_j)\mathbf{1}$. Let $\mathbf{Q}' \in \mathbb{R}^{N \times d'}$ be the matrix with rows $(\mathbf{q}'_i)^\top$, $\mathbf{K}' \in \mathbb{R}^{N \times d'}$ be the matrix with rows $(\mathbf{k}'_j)^\top$, and $\mathbf{V} \in \mathbb{R}^{N \times d_v}$ be the matrix of value vectors. The output matrix $\mathbf{Y} \in \mathbb{R}^{N \times d_v}$ is given by:
$$ \mathbf{Y} = (\mathbf{Q}'(\mathbf{K}')^\top)\mathbf{V}. $$
By the associativity of matrix multiplication, this can be computed as $\mathbf{Y} = \mathbf{Q}'((\mathbf{K}')^\top\mathbf{V})$. The term $(\mathbf{K}')^\top\mathbf{V}$ costs $O(N d' d_v)$ to compute, resulting in a $d' \times d_v$ matrix. Multiplying this by $\mathbf{Q}'$ costs an additional $O(N d' d_v)$. The total complexity is therefore $O(N d' d_v)$, linear in the sequence length $N$.

\subsection{Proof of Platonic Transformers Implementing Group Convolutions}
\label{app:group_correlation}

The dynamic convolution from Proposition~\ref{prop:dynamic_convolution} becomes a dynamic \textit{group convolution} within the Platonic Transformer. This is a direct consequence of applying the operation to lifted coordinates $\mathbf{p}_i(R) = R^{-1}\mathbf{p}_i$ for each reference frame $R \in \gG$. Since the relative position vector becomes $R^{-1}(\mathbf{p}_j - \mathbf{p}_i)$, the kernel's input is transformed accordingly. The resulting output for each frame takes the form of a group cross-correlation\footnote{Following common convention, we refer to this operation as a group convolution, though it is technically a cross-correlation \cite{cohen2016group, Bekkers2020B-Spline}.}:
\begin{equation}
\label{eq:dynamic_group_correlation}
    \mathbf{y}_i(R) = \sum_{j=1}^N \phi_{\mathbf{q}_i(R)}\left(R^{-1}(\mathbf{p}_j - \mathbf{p}_i)\right) \mathbf{v}_j(R).
\end{equation}
Here, the kernel $\phi_{\mathbf{q}_i(R)}$ is steered by the group element $R$, defining an equivariant dynamic group convolution.

\section{Equivalent Attention via RoPE Base Frequency Steering}\label{app:rope_frequency_steering}

We achieve full equivariance to Euclidean transformations by making the RoPE operator dependent on a local reference frame $R$ by projecting positions $\mathbf{p}_i$ on $R$ to obtain positions $\mathbf{p}_i(R) := R^{-1}\mathbf{p}_i$. The attention scores $s_{ij}(R)$ for a query $\mathbf{q}(R)_i$ and key $\mathbf{k}(R)_j$ are computed as:
\begin{align}
    s_{ij}(R) = \mathbf{q}_i(R)^\top \rho_\Omega((\mathbf{p}_j - \mathbf{p}_i)(R))\mathbf{k}_j(R),
\end{align}
An equivalent approach is to steer the set of base RoPE frequencies $\Omega$ for each frame, creating a frame-specific set $\Omega_R = \{R\bm{\omega}_k \mid \bm{\omega}_k \in \Omega\}$ \citep{reddy1996fft}. The attention scores are then computed as:
\begin{equation}
\label{eq:steered_raw_score}
\hat{s}_{ij}(R) = \mathbf{q}_i(R)^\top \bm{\rho}_{\Omega_R}(\mathbf{p}_j - \mathbf{p}_i) \mathbf{k}_j(R).
\end{equation}
\begin{proof}
For ${s}_{ij}(R)$ and $\hat{s}_{ij}(R)$ to be equivalent, we require that $\rho_\Omega((\mathbf{p}_j - \mathbf{p}_i)(R)) = \bm{\rho}_{\Omega_R}(\mathbf{p}_j - \mathbf{p}_i)$.
For this, we need to show that $\bm{\omega}^\top_k\Delta\mathbf{p}(R) = (R\bm{\omega})^\top_k\Delta\mathbf{p}$, where $\Delta\mathbf{p} = \mathbf{p}_j -\mathbf{p}_i$. Let $\mathbf{R}$ be the orthogonal matrix corresponding to $R$. Then we have:
\begin{align}
    \bm{\omega}^\top_k\Delta\mathbf{p}(R) 
    &= \bm{\omega}^\top_kR^{-1}\Delta\mathbf{p} \\
    &= \bm{\omega}^\top_k\mathbf{R}^\top\Delta\mathbf{p} \\
    &= (\mathbf{R}\bm{\omega}_k)^\top\Delta\mathbf{p} \\
    &= (R\bm{\omega}_k)^\top\Delta\mathbf{p}
\end{align}
Thus, projecting global positions or steering the base frequencies are equivalent.
\end{proof}
By projecting the global positions, the RoPE attention mechanism remains identical to its traditional formulation. Steering the base frequencies, however, is often more computationally efficient, since the number of base frequencies is typically much smaller than the input sequence length.

\section{Details of Architecture}
\label{app:architecture_details}

In this section, we provide additional details about the architecture of the Platonic Transformer and the various model configurations used in our experiments. Our framework is designed to be equivariant to roto-translation groups, primarily $SE(n)$ and, through specific configurations, the full Euclidean group $E(n)$.

We denote the core embedding dimension per group element as $d_{\text{hidden}}$. Since our features are functions on a group $\gG$ of order $|\gG|$, the total feature dimension of a layer is $d_{\text{model}} = |\gG| \times d_{\text{hidden}}$. The specific group is determined by the \texttt{solid\_name} parameter.

For the initial feature processing, input scalars and vectors are first embedded and then lifted into a group-equivariant feature space using an initial lifting operation. This creates a tensor where the channel dimension is expanded by a factor of $|\gG|$. An initial group-equivariant linear layer then projects these lifted features to the model's working dimension, $d_{\text{model}}$. Optionally, an equivariant Absolute Positional Encoding (APE), parameterized by \texttt{ape\_sigma}, can be added at this stage.

The main body of the network consists of a stack of equivariant transformer blocks. Each block contains two main sub-modules: a group-equivariant interaction layer and a feed-forward network (FFN), connected with residual connections. Normalization is applied either before each sub-module or after.

For the group-equivariant interaction layer, we denote the number of attention heads per group element as $n_{\text{head}}$. The total number of effective parallel heads is therefore $|\gG| \times n_{\text{head}}$. The dimension of each head, $d_{\text{head}}$, is calculated as $d_{\text{hidden}} / n_{\text{head}}$. The input features are first projected to query, key, and value representations using group-equivariant linear layers. To encode relative spatial information, group-equivariant Rotary Position Embeddings, parameterized by \texttt{rope\_sigma} and \texttt{learned\_freqs}, are applied to the query, key and value vectors. The interaction can then be performed either as a full softmax-based attention mechanism or as a linear-time dynamic group convolution via the \texttt{attention} flag. For the Feed Forward Networks (FFNs), we denote the hidden feature dimension as $d_{\text{ffn}} = d_{\text{model}} \times f_{\text{factor}}$. The FFN consists of two group-equivariant linear layers with a GELU activation function in between.

For the final output, two separate readout heads project the features to the desired scalar and vector output dimensions. For graph-level tasks, a pooling operation performs a mean aggregation over the node and group dimensions to produce a final invariant prediction. For node-level tasks, an averaging operation over the group axis projects the features back to standard invariant scalar and equivariant vector representations. Following standard Transformer practices, we apply dropout to the attention weights and FFN activations, and stochastic depth to the outputs of the equivariant transformer blocks. 

Particular values for all the important hyperparameters used for the experiments are in the Table.\ref{tab:hyperparam} 

\section{Hyperparameter Tuning and Model Selection Strategy}
\label{app:hyperparam_tuning_model_selection}

This section outlines the full procedure used to configure and train our models.

\paragraph{Baseline Optimization} To establish a fair point of comparison, we first optimized the general training protocol using only the translation-only equivariant ($T(n)$) models. This initial phase involved tuning the optimizer, learning rate schedule, weight decay, and data augmentations to ensure the baseline models were as competitive as possible. This fixed protocol was then used for all subsequent experiments.

\paragraph{Hyperparameter Sweep for Model Selection} With the training protocol fixed, we performed an extensive hyperparameter sweep for both $SE(n)$ and $T(n)$ model classes. This sweep was designed to find the optimal architectural parameters while maintaining an equal computational budget between model families. The parameters and their swept values are summarized in Table~\ref{tab:hyperparameter_sweep}.
\begin{table}[h]
\centering
\caption{Hyperparameter Sweep Configurations.}
\label{tab:hyperparameter_sweep}
\begin{tabular}{@{}lll@{}}
\toprule
\textbf{Parameter} & \textbf{Configuration} & \textbf{Values} \\
\midrule
Hidden Dim & - & $[384, 576, 768, 1152, 1920]$ \\
\midrule
Layers & - & $[7-20]$ \\
\midrule
\multirow{4}{*}{Number of Heads} & \textbf{$T(n)$ model} (HS=16) & $[24, 36, 48, 72]$ \\
 & \textbf{$T(n)$ model} (HS=32) & $[12, 18, 24, 36]$ \\
 & \textbf{$SE(n)$ model} (HS=16) & $[2, 3, 4, 6]$ heads per group element \\
 & \textbf{$SE(n)$ model} (HS=32) & $[1, 2, 3]$ heads per group element \\
\midrule
Rope Sigma ($\sigma_{\text{rope}}$) & RoPE frequency scaling & $[0.5 - 2.0]$ \\
\midrule
Attention & Full Attention / Linear Conv & $[\text{True}, \text{False}]$ \\
\midrule
Solid Group ($\gG$) & Symmetry group & $[\text{Octahedron}, \text{Tetrahedron}, C_{2-8}, D_{4-8} ]$ \\
\midrule
Lambda F ($\lambda_F$) & OMol Force loss weight & $[1.0 - 25.0]$ \\
\midrule
Lambda E ($\lambda_F$) & OMol Energy loss weight & $[1.0 - 15.0]$ \\
\midrule
Batch Size & Samples/Atoms per batch & $[64 - 512]/[3000-24000]$ \\
\midrule
Weight Decay &  & $[1e^{-3} - 1e^{-7}]$ \\
\bottomrule
\end{tabular}
\end{table}

The hyperparameter sweep was conducted across multiple layers and three random seeds for a \emph{moderate number of epochs} to efficiently explore the configuration space. It should be noted that some configurations are not applicable for the Octahedral group, as its 24 symmetry elements require a minimum of 24 total effective heads (i.e., at least one head per group element). After identifying the best-performing hyperparameters for both the $SE(n)$ and $T(n)$ model families from this sweep, we proceeded to a final, full-length training run. These selected models were trained for a \emph{large number of epochs} to ensure convergence again with fixed compute budget, producing the final results reported in the main paper.

\subsection{Heads vs.\ Group Size}
\label{ablation_group_vs_head}
We present a targeted ablation over (i) the effective number of heads, (ii) the effective head dimensionality, and (iii) the group size would clarify whether performance gains come primarily from adding more geometric frames or from increasing feature diversity per frame.
In our current experiments, head size is fixed and we implicitly ablate group size across datasets.
Overall, increasing the number of frames improves performance, but with diminishing returns.
One plausible explanation is that as group size grows, the effective head dimensionality per frame becomes too small, limiting capacity.
Finally, we emphasize that for downstream use we would not necessarily maximize the number of group elements under a fixed compute budget; instead, we would select group size and head configuration via task-specific ablations and resource constraints.

\begin{table}[h]
\centering
\small
\setlength{\tabcolsep}{5pt}
\caption{QM9 ablation studying the trade-off between group size (number of geometric frames) and feature diversity per frame.
We report hidden dimension $d_{\text{model}}$, per-frame dimension $d_{\text{model}}/|\mathcal{G}|$, the configured number of heads, and the resulting effective number of heads and per-head dimension.}
\begin{tabular}{lrrrrrrcc}
\toprule
Group & $d_{\text{model}}$ & $d_{\text{model}}/|\mathcal{G}|$ & \#Heads & Eff.\ \#Heads & Eff.\ head dim & $\alpha$ & $\mu$ \\
\midrule
1 (Trivial)        & 1152 & 1152 & 72 & 72 & 16 & 0.028 & 0.064 \\
12 (Tetrahedron)   & 1152 &   96 & 72 &  6 & 16 & 0.012 & 0.049 \\
24 (Octahedron)    & 1152 &   48 & 72 &  3 & 16 & 0.010 & 0.048 \\
\midrule
1 (Trivial)        & 1152 & 1152 & 48 & 48 & 24 & 0.027 & 0.064 \\
12 (Tetrahedron)   & 1152 &   96 & 48 &  4 & 24 & 0.012 & 0.048 \\
24 (Octahedron)    & 1152 &   48 & 48 &  2 & 24 & 0.011 & 0.047 \\
\bottomrule
\end{tabular}

\label{tab:heads_vs_group}
\end{table}

\section{\rednote{Additional Results on ImageNet-1K}}
\label{app:exp_details_imagenet1k}

\rednote{To further demonstrate the efficacy of the equivariance constraint, we provide additional results on ImageNet-1K \citep{deng2009imagenet}. As shown in Table \ref{tab:imagenet_results}, we observe the proposed constraint consistently outperforms the trivial unconstrained baseline or matches performance while requiring fewer parameters and shorter compute time. Despite ImageNet-1K falling under non-equivariant tasks, the constraint comes with computational advantages. ImageNet-1K is a large-scale, non-aligned dataset, and while our models are not optimized for state-of-the-art performance, our results demonstrate that our Flop-2d$_2$ model achieves parity with the trivial ViT baseline while halving the parameter count. This efficiency gain enables two distinct deployment strategies: prioritizing efficient training, or scaling model capacity (e.g., the wide configuration) to match the baseline's computational footprint, which provides clear performance improvement.}

\begin{table}[h]
\centering
\small
\setlength{\tabcolsep}{5pt}
\caption{ImageNet-1K results demonstrating the utility of equivariance constraints in the Platonic Transformer used in a ViT setup. We outperform the Trivial variant while maintaining a modest parameter budget.}
\begin{tabular}{lcccc}
\toprule
\textbf{Model} & \textbf{Hidden dim.} & \textbf{\# Params} & \textbf{Best Top-1} & \textbf{Epoch Time} (min) \\
\midrule
Trivial        & 768  & 78.9M & 79.53\% & 9.4 \\
Flop-2d       & 768  & 39.9M & 79.17\% & 7.6 \\
Flop-2d (wide) & 1088 & 79.5M & 80.41\% & 9.4\\
\bottomrule
\end{tabular}
\label{tab:imagenet_results}
\end{table}
\section{Regression Experiments on QM9}

\label{app:regression_qm9}
\subsection{Description of the dataset}
The QM9 dataset \citep{ramakrishnan2014quantum} contains up to 9 heavy atoms and 29 atoms, including hydrogens. We use the train/val/test partitions introduced in \citet{pmlr-v70-gilmer17a}, which consist of 100K/18K/13K samples, respectively, for each partition.

\subsection{Training Details for the Regression Experiment}
For the QM9 regression task, we train the Platonic Transformer to predict molecular properties. Before being fed to the model, the input molecular geometries are centered by subtracting the mean coordinate of each molecule. To stabilize training, we normalize the target property values by subtracting their mean and dividing by their standard deviation, with these statistics computed over the training set. We employ data augmentation in the form of random $SO(3)$ rotations applied to the coordinates during training.

The model is trained for a total of 1000 epochs using a batch size of 96. We utilize the Adam optimizer with a learning rate of $5 \times 10^{-4}$ and a weight decay of $10^{-8}$. A cosine annealing schedule with a 10-epoch linear warmup adjusts the learning rate throughout training. To prevent exploding gradients, we apply gradient clipping with a maximum norm of 0.5. The training objective is the Mean Absolute Error (MAE) on the normalized target values, while validation and testing are performed by calculating the MAE on the original, unnormalized scale.

\rednote{ At test time, we optionally apply test-time augmentation (TTA) by averaging the original prediction with four additional predictions obtained from independently sampled random $SO(3)$ rotations of each molecule, giving five test evaluations in total. The single-orientation prediction is also logged separately as the w/o TTA score. Since molecular targets are invariant to global rotations, this averaging should not change the ideal prediction, but it reduces residual orientation-dependent numerical noise in finite-precision equivariant computation. Further hyperparameter details are available in Table \ref{tab:hyperparam}.
}
\subsection{Regression results on QM9}
\rednote{
Table \ref{tab:qm9_results_extended} summarizes the results on regression on QM9. The comparison shows a consistent 
benefit from the octahedral equivariance constraint over the translation-only Trivial variant: the Octahedron-Attention
model improves on the corresponding Trivial-Attention model for every reported target, with especially clear gains 
on electronic properties such as $\Delta \varepsilon$, $\varepsilon_{\text{HOMO}}$, $\varepsilon_{\text{LUMO}}$, and 
$\mu$. In absolute terms, the model is competitive with highly specialized equivariant architectures, obtaining the 
strongest reported errors among the listed methods for $\Delta \varepsilon$ and $\mu$, while remaining close to the
best entries for several other targets. The results are nevertheless not uniformly dominant: established molecular 
architectures such as PaiNN~\citep{schutt2021equivariant}, TorchMD-NET~\citep{tholke2022equivariant}, and SphereNet~\citep{liu2022spherical} remain stronger on some thermochemical and spatial targets.
Thus, the table is best read as evidence that the Platonic Transformer provides a strong general-purpose geometric
prior, rather than as a target-wise replacement for heavily tuned molecular models.

The table also reports both test-time augmentation (TTA) and single-orientation evaluation. Empirically, the gains are modest on several electronic targets and larger on some extensive thermochemical quantities 
and $R^2$; for example, Octahedron-Attention improves from $.047$ to $.043$ on $\alpha$, from $.0097$ to $.0087$ on 
$\mu$, from $11.3$ to $8.53$ on $G$, and from $.212$ to $.138$ on $R^2$.}

Crucially, unlike baselines such as EquiformerV2 which rely on target-specific hyperparameter tuning, we employ a 
single fixed set of hyperparameters across all targets. Despite this constraint, the Platonic Transformer achieves 
competitive results, suggesting that further performance gains could be realized with target-specific optimization.

\begin{table}[t]
\centering
\caption{
Mean absolute error results on QM9 test set.  
$\dagger$ denotes using different data partitions.
}
\resizebox{1.0\textwidth}{!}{
\begin{tabular}{llcccccccccccc}
\toprule[1.2pt]
& Task & $\alpha$ & $\Delta \varepsilon$ & $\varepsilon_{\text{HOMO}}$ & $\varepsilon_{\text{LUMO}}$ & $\mu$ & $C_{\nu}$ & $G$ & $H$ & $R^2$ & $U$ & $U_0$ & ZPVE \\ 
Model & Units & $a_0^3$ & meV & meV & meV & D & cal/mol K & meV & meV & $a_0^2$ & meV & meV & meV\\
\midrule[1.2pt]

DimeNet++~\citep{gasteiger2020fast} & & .044 & 33 & 25 & 20 & .030 & .023 & 8 & 7 & .331 & 6 & 6 & 1.21 \\
EGNN~\citep{satorras2021n}$^{\dagger}$ & & .071 & 48 & 29 & 25 & .029 & .031 & 12 & 12 & .106 & 12 & 11 & 1.55 \\

PaiNN~\citep{schutt2021equivariant} & & .045 & 46 & 28 & 20 & .012 & .024 & \textbf{7.35} & \textbf{5.98} & .066 & \textbf{5.83} & \textbf{5.85} & 1.28 \\

TorchMD-NET~\citep{tholke2022equivariant} & & .059 & 36 & 20 & 18 & .011 & .026 & 7.62 & 6.16 & \textbf{.033} & 6.38 & 6.15 & 1.84 \\

SphereNet~\citep{liu2022spherical} & & .046 & 32 & 23 & 18 & .026 & \textbf{.021} & 8 & 6 & .292 & 7 & 6 & \textbf{1.12} \\

SEGNN~\citep{brandstetter2022geometric}$^{\dagger}$ & & .060 & 42 & 24 & 21 & .023 & .031 & 15 & 16 & .660 & 13 & 15 & 1.62 \\

EQGAT~\citep{le2022equivariant} & & .053 & 32 & 20 & 16 & .011 & .024 & 23 & 24 & .382 & 25 & 25 & 2.00 \\

Equiformer~\citep{liao2023equiformer} & & .046 & 30 & 15 & 14 & .011 & .023 & 7.63 & 6.63 & .251 & 6.74 & 6.59 & 1.26 \\

EquiformerV2~\citep{liao2024equiformerv2improvedequivarianttransformer} & & .050          & 29       & \textbf{14}        & \textbf{13}        & .010 & .023 & 7.57 & 6.22 & .186 &  6.49 & 6.17 & 1.47 \\
P$\Theta$NITA~\citep{bekkers2024fast} & & \textbf{.038} & 30.4 & 16.0 & 14.5 & .012 & .024 & 8.63 & 8.04 & .235 & 8.67 & 8.31 & 1.29 \\

\midrule
Platonic Transformer (Trivial, Attn, TTA) & & $.059$ & $38.6$ & $23.3$ & $20.2$ & $.017$ & $.030$ & $13.2$ & $13.4$ & $.191$ & $13.8$ & $13.4$ & $1.57$ \\
Platonic Transformer (Trivial, Conv, TTA) & & $.062$ & $44.9$ & $26.5$ & $23.9$ & $.024$ & $.033$ & $13.3$ & $12.9$ & $.150$ & $13.2$ & $12.7$ & $1.61$ \\
Platonic Transformer (Octa, Attn, TTA) & & $.043$ & $\textbf{28.98}$ & $15.8$ & $13.4$ & $\textbf{.0087}$ & $.021$ & $8.53$ & $8.26$ & $.138$ & $8.77$ & $9.29$ & $1.36$ \\
Platonic Transformer (Octa, Conv, TTA) & & $.045$ & $31.8$ & $16.7$ & $15.1$ & $.011$ & $.024$ & $14.1$ & $18.8$ & $.138$ & $11.2$ & $22.0$ & $1.40$ \\
\midrule
Platonic Transformer (Trivial, Attn, w/o TTA) & & $.060$ & $38.8$ & $23.4$ & $20.3$ & $.018$ & $.030$ & $14.7$ & $15.1$ & $.275$ & $15.6$ & $15.4$ & $1.66$ \\
Platonic Transformer (Trivial, Conv, w/o TTA) & & $.065$ & $45.5$ & $26.9$ & $24.9$ & $.027$ & $.035$ & $14.7$ & $14.4$ & $.205$ & $14.9$ & $14.1$ & $1.69$ \\
Platonic Transformer (Octa, Attn, w/o TTA) & & $.047$ & $30.0$ & $16.5$ & $14.0$ & $.0097$ & $.023$ & $11.3$ & $11.4$ & $.212$ & $12.3$ & $13.1$ & $1.58$ \\
Platonic Transformer (Octa, Conv, w/o TTA) & & $.050$ & $33.2$ & $17.3$ & $16.1$ & $.012$ & $.026$ & $16.3$ & $23.6$ & $.191$ & $13.6$ & $28.3$ & $1.40$ \\

\bottomrule[1.2pt]

\end{tabular}
}
\label{tab:qm9_results_extended}
\end{table}

\begin{table*}[t]
    \centering
    \caption{Runtime comparison. Left: inference wall-clock times,  Right: mean training-step timing on input, decomposed into forward, backward, and optimizer-step time on QM9.}
 
    \begin{subtable}[t]{0.39\textwidth}
       
        \centering
        \caption{QM9 inference wall-clock times.}
        \label{tab:runtime_comparison}
        \label{tab:wallclock_times}
        \resizebox{0.8\linewidth}{!}{%
        \begin{tabular}{lr}
            \toprule
            \multicolumn{2}{c}{Platonic Transformer} \\
            \textbf{Group} & Avg. Time (ms) ($\downarrow$) \\
            \midrule
            $\{\mathbf{e}\}$ & 2.87 $\pm$ 0.29 \\
            Tetrahedron & \textbf{2.79} $\pm$ 0.21 \\
            Octahedron & 2.85 $\pm$ 0.25 \\
            \midrule
            \multicolumn{2}{c}{Reference methods} \\
            \textbf{Method} & \multicolumn{1}{r}{Avg. Time (ms) ($\downarrow$)} \\
            \midrule
            Standard Transformer & \textbf{2.01} $\pm$ 3.74 \\
            G-Hyena {\small{[\citenum{moskalev2025geometrichyenanetworkslargescale}]}} & 44.06 $\pm$ 60.05 \\
            TFN {\small{[\citenum{thomas2018tensorfieldnetworksrotation}]}} & 590.45 $\pm$ 269.25 \\
            \bottomrule
        \end{tabular}}
    \end{subtable}
    \hfill
    \begin{subtable}[t]{0.58\textwidth}
        \centering
        \caption{\rednote{3D point-cloud (QM9) training-step timing.}}
        \label{tab:pointcloud_step_times}
        \resizebox{\linewidth}{!}{%
        \begin{tabular}{l c c c c c}
    \toprule
    \textbf{Solid} & \textbf{Params} & \textbf{Forward} & \textbf{Backward} & \textbf{Opt. step} & \textbf{Total} \\
    \midrule
    Trivial     
    & 224.99M 
    & 33.44 {\scriptsize(0.96$\times$)} 
    & 31.55 {\scriptsize(0.93$\times$)} 
    & 1.92 {\scriptsize(2.23$\times$)} 
    & 66.91 {\scriptsize(0.96$\times$)} \\

    Tetrahedron 
    & 18.86M  
    & 34.18 {\scriptsize(0.98$\times$)} 
    & 30.25 {\scriptsize(0.89$\times$)} 
    & 0.98 {\scriptsize(1.14$\times$)} 
    & 65.41 {\scriptsize(0.94$\times$)} \\

    Octahedron  
    & 9.49M   
    & 34.98 {\scriptsize(1.00$\times$)} 
    & 33.83 {\scriptsize(1.00$\times$)} 
    & 0.86 {\scriptsize(1.00$\times$)} 
    & 69.68 {\scriptsize(1.00$\times$)} \\

    Icosahedron 
    & 3.87M   
    & 34.43 {\scriptsize(0.98$\times$)} 
    & 41.71 {\scriptsize(1.23$\times$)} 
    & 0.61 {\scriptsize(0.71$\times$)} 
    & 76.75 {\scriptsize(1.10$\times$)} \\
    \bottomrule
\end{tabular}}
    \end{subtable}
\end{table*}

\subsection{Runtime measurements}

Given that the Platonic Transformer preserves the standard Transformer computation graph, our method achieves inference speeds of the same order of magnitude as a standard Transformer layer. As shown in Table~\ref{tab:runtime_comparison}, on QM9 a single Platonic Transformer layer runs in roughly $2.8$ ms on a batch of 64 molecules on a single H200 GPU, averaged over 10 batches. This is substantially faster than the geometric reference methods considered here, with G-Hyena and Tensor Field Networks requiring $44.06$ ms and $590.45$ ms per layer, respectively, under the same setup. To produce the QM9 wall-clock timing for the standard Transformer in Table~\ref{tab:runtime_comparison}, node features from $B=64$ QM9 molecules were projected from $d_{\text{in}}=11$ to $d_{\text{model}}=512$ using a linear layer and fed as tokens into a \texttt{TransformerEncoderLayer} module provided by PyTorch with 16 heads. We measure wall-clock timings for a forward pass over 10 batches on a single H200 GPU. This provides a reference timing for comparing the inference speed of the Platonic Transformer with other geometric baselines such as G-Hyena~\citep{moskalev2025geometrichyenanetworkslargescale} and Tensor Field Networks~\citep{thomas2018tensorfieldnetworksrotation}.

\rednote{Table~\ref{tab:pointcloud_step_times} also reports a training-step wall-clock breakdown of different symmetry group on a representative QM9 batch input. We benchmark on a single NVIDIA H100 GPU (100\,GB, CUDA~12.6) using FlashAttention, \texttt{torch.compile}, and fused SGD. The batch contains $B=167$ molecules (mean $\approx 18$ atoms/molecule; $\approx 3006$ atoms per batch) with $d_{\text{model}}=1200$; we report mean times over 200 measured steps after 50 warmup steps. The forward pass is essentially unchanged across variants (33.44--34.98\,ms) and the backward pass is comparable for the tetrahedral and octahedral models (30.25\,ms and 33.83\,ms vs.\ 31.55\,ms for the trivial model), while the icosahedral model incurs a higher backward cost (41.71\,ms). In contrast, equivariant weight sharing substantially reduces the parameter count (224.99M $\rightarrow$ 3.87M), making the optimizer step cheaper (1.92\,ms for the trivial model vs.\ 0.98/0.86/0.61\,ms for tetrahedral/octahedral/icosahedral). Overall step times (excluding \texttt{zero\_grad}) are 66.91\,ms (trivial), 65.41\,ms (tetrahedral), 69.68\,ms (octahedral), and 76.75\,ms (icosahedral), supporting the claim that equivariance can be introduced at comparable wall-clock cost while reducing parameter-update overhead.}

\section{Details of experiments on Cifar10}
\label{app:exp_details_cifar}
\subsection{Description of the dataset}
The CIFAR-10 dataset \citep{krizhevsky2009learning} is a standard benchmark for image classification, consisting of 60,000 32x32 color images across 10 classes. The dataset is divided into a training set of 50,000 images and a test set of 10,000 images.


\subsection{Training Details}

For the CIFAR-10 classification task, our experimental setup is closely adapted from the supervised training recipe for Vision Transformers presented in DeiT-III \citep{touvron2022deit}. We tokenize each image into a sequence of non-overlapping patches using a patch size of $4 \times 4$ pixels, a key deviation from the ImageNet configurations to suit the lower resolution of the dataset. 

The model is trained using the LAMB optimizer, which is subject to a cosine decay schedule following a 5-epoch warm-up period. A comprehensive suite of regularization techniques is employed, including a weight decay of 0.02, Mixup with an alpha value of 0.8, and CutMix with an alpha of 1.0, in addition to model-size-dependent Stochastic Depth. The data augmentation pipeline is built upon the `3-Augment` strategy, incorporating standard Random Resized Crop (RRC), horizontal flips, ColorJitter with a factor of 0.3, and a single, randomly selected transformation from a pool of three: Grayscale, Solarization, or Gaussian Blur. 

The training objective is optimized using a Binary Cross-Entropy (BCE) loss, and positional information is supplied to the transformer blocks through a combination of both Absolute Positional Encodings (APE) and Rotary Position Embeddings (RoPE). Further hyperparameter details are available in Table \ref{tab:hyperparam}.

\section{Details of experiments on ScanObjectNN}
\label{app:exp_details_scanobj}
\subsection{Description of the dataset}
ScanObjectNN\citep{uy-scanobjectnn-iccv19} dataset is a real-world 3D point cloud dataset. It contains 15,000 objects divided into 15 categories with 2902 unique object instances. It contains background, parts missing, and object deformation elements, which makes the classification task a challenge. The dataset consists of three variants OBJ\_BG, OBJ\_ONLY and PB\_T50\_RS, for now the latter is only examined.

\subsection{Training Details}
In order to prepare the input point cloud $\mathbf{P}\in\mathbb{R}^{N\times 3}$ for processing by the Platonic Transformer, we follow a preprocessing procedure. Similar to established methods \cite{pang2023masked,yu2022point}, we first use Farthest Point Sampling (FPS) to select a set of $L=2048$ central points, denoted as $\mathbf{P}_C \in\mathbb{R}^{L\times 3}$ with $L=2048$. Subsequently, for each central point $P_C^i$, we define a local patch $x_p^i\in\mathbb{R}^{K\times 3}$ by identifying its K-Nearest Neighbors (KNN) within the original point cloud P. These local patches serve as the primary input vectors to the Platonic Transformer.

Additionally, to account for the axis-aligned nature of the dataset and to provide the model with a global reference frame, we incorporate rotation augmentation. For each input vector, a rotation matrix is applied. This matrix is either a random rotation or the 3×3 identity matrix, which is concatenated with the input vector to provide the model with information about the global orientation.

\begin{table}[]
\centering
\caption{Hyperparameters for all datasets}
\label{tab:hyperparam}
\begin{tabular}{lccccc}
\toprule
\textbf{Hyperparameter} & \textbf{QM9} & \textbf{OMol25} & \textbf{CIFAR10} & \textbf{ScanObjectNN} & \textbf{ProteinMD} \\
\midrule \\

\textbf{Architecture} & & & & & \\
\cmidrule[0.7pt](lr){1-1}
\texttt{hidden\_dim} & 1152 & 1920 & 768 & 768 & 1152 \\
\texttt{num\_layers} & 14 & 16 & 12 & 12 & 5 \\
\texttt{num\_heads} & 72 & 12 & 12 & 48 & 72 \\
\\
\textbf{Positional encoding} & & & & & \\
\cmidrule[0.7pt](lr){1-1}
\texttt{rope\_sigma} & 1.5 & 2.0 & 16.0 & 18.0 & 1 \\
\texttt{ape\_sigma} & 0.5 & None & 16.0 & 10.0 & None \\
\texttt{learned\_freqs} & True & True & True & True & True \\
\texttt{freq\_init} & spiral & random & spiral & spiral & random \\
\\
\textbf{Attention / readout} & & & & & \\
\cmidrule[0.7pt](lr){1-1}
\texttt{attention} & True & True & True & True & False \\
\texttt{use\_key} & False & True & False & False & False \\
\texttt{qk\_norm} & False & True & False & False & False \\
\texttt{rope\_on\_values} & True & True & False & True & False \\
\texttt{dropout} & 0.0 & 0.0 & 0.0 & 0.0 & 0 \\
\texttt{drop\_path\_rate} & 0.0 & 0.0 & 0.1 & 0.0 & 0 \\
\texttt{mean\_aggregation} & False & False & False & False & False \\
\texttt{ffn\_dim\_factor} & 4 & 4 & 4 & 4 & -- \\
\texttt{layer\_scale\_init\_value} & None & 1e-4 & None & None & None\\
\\
\textbf{Training} & & & & & \\
\cmidrule[0.7pt](lr){1-1}
\texttt{train\_augm} & True & True & False & True & False \\
\texttt{lr} & 5e-4 & 5e-4 & 8e-4 & 2e-4 & 5e-4 \\
\texttt{batch\_size} & 96 & 3000$^{\dagger}$ & 256 & 64 & 32 \\
\texttt{epochs} & 1000 & 20 & 500 & 300 & 20 \\
\texttt{warmup} & 10 & 1\% steps & 20 & 10 & 5 \\
\texttt{weight\_decay} & 1e-8 & 1e-8 & 0.05 & 1.5e-5 & 1e-8 \\
\texttt{lambda\_F} & -- & 20.0 & -- & -- & -- \\
\texttt{lambda\_E} & -- & 10.0 & -- & -- & -- \\
\texttt{cosine\_scheduler} & True & True & True & True & True \\
\texttt{gpus} & 1 & 4 & 1 & 1 & 1 \\
\bottomrule
\end{tabular}

\vspace{0.2em}
\footnotesize{$^{\dagger}$ OMol uses dynamic batching: each step packs up to 12000 atoms / 2.4M edges across 4GPUs.}
\end{table}

\section{Details of experiments on OMol25}
\label{app:exp_details_omol}
\subsection{Description of the dataset}
For large-scale molecular experiments, we use the Open Molecules 2025 (OMol25) dataset \citep{levine2025openmolecules2025omol25}, a comprehensive collection of over 100 million Density Functional Theory (DFT) calculations performed at the wB97M-V/def2-TZVPD level of theory. This dataset is notable for its vast chemical and structural diversity, encompassing 83 elements and systems up to 350 atoms. The structures are drawn from a wide range of chemical domains, including small molecules, biomolecules, metal complexes, and electrolytes, and feature varied charges, spin states, conformers, and reactive geometries.

The OMol25 dataset is organized into several training sets and splits for validation and testing to ensure consistent and robust model evaluation. The full training set, "All," contains over 100 million DFT calculations. For more computationally efficient training and development, a smaller, uniformly sampled "4M" split is provided, containing approximately 4 million structures. Our work primarily utilizes the "Neutral" split, which consists of approximately 34 million charge-neutral, singlet structures drawn from established community datasets like ANI-2X, GEOM, and SPICE2. This split is designed to benchmark model performance on familiar organic chemistry space without the added complexity of variable charge and spin.

For validation and testing, OMol25 provides several out-of-distribution (OOD) splits designed to evaluate model generalizability. The primary validation set ("Val Comp") consists of structures with compositions held out from the training set. Further specialized test sets include held-out organic and metal-complex reactions ("Test Reactivity"), experimental crystal structures from the Crystallography Open Database ("Test COD"), and unique anion structures ("Test Anions"), among others. The core task is Structure to Energy and Forces (S2EF), where models are evaluated on their ability to predict the total energy of a structure and the per-atom forces, with Mean Absolute Error (MAE) being the primary metric.

\subsection{Training Details}

For OMol25, we train on the 4M training split and evaluate on the held-out validation split provided with the dataset. The model is trained using AdamW with a learning rate of $5 \times 10^{-4}$ and weight decay $10^{-8}$. We use a cosine decay schedule with a linear warmup over the first $1\%$ of optimizer steps and a minimum learning rate of $10^{-6}$. Training is conducted for 20 epochs. To stabilize optimization, we apply gradient clipping with a maximum norm of $1.0$ and maintain an exponential moving average of the model weights with decay $0.99$ after a warmup of 2000 steps. We also use random $O(3)$ data augmentation, applying random rotations and reflections to the molecular geometries and forces during training.

We train with dynamic batching, where each batch is packed up to a fixed computational budget rather than a fixed number of molecules. Specifically, each optimizer step contains at most 12,000 atoms and 2.4M edges. This matches the effective batch size used in our distributed training setup while allowing batches to adapt to the varying molecule sizes in OMol25. Validation is performed every 5000 optimizer steps and is capped at 500 validation batches.

The training objective is a weighted sum of two components: a per-atom energy MAE and an L2-norm MAE on the force vectors. The force loss is calculated as the average Euclidean norm of the error between predicted and target force vectors. The total loss is
\begin{equation}
    \mathcal{L} = \lambda_E \mathcal{L}_E + \lambda_F \mathcal{L}_F,
\end{equation}
with $\lambda_E=10.0$ and $\lambda_F=20.0$.

To ensure stable training on this large-scale task, we normalize the target energies using a linear referencing scheme. We subtract precomputed elemental reference energies from the raw DFT total energy:
\begin{equation} \label{eq:linear_ref}
E_{\text{ref}} = E_{\text{DFT}} - \sum_{i=1}^{N} E_{Z_i}^{\text{atom}},
\end{equation}
where $E_{\text{ref}}$ is the referenced target energy, $E_{\text{DFT}}$ is the system's total DFT energy, $N$ is the number of atoms, $Z_i$ is the atomic number of atom $i$, and $E_{Z_i}^{\text{atom}}$ is the precomputed reference energy for that element. Energies and forces are then scaled by the training-set RMSD. This procedure is consistent with the methodology used for the OC22 dataset \citep{tran2023open} and helps maintain comparability with other large-scale models.

For the OMol25 experiments, we use learned keys together with QK normalization, RoPE on values, charge/spin conditioning, and FlashAttention. The detailed hyperparameters for this configuration are summarized in Table~\ref{tab:hyperparam}.

\section{Details of experiments on ProteinMD}
\label{app:exp_details_proteinmd}

\subsection{Description of the dataset}

ProteinMD is a molecular dynamics benchmark derived from protein trajectories processed with MDAnalysis \citep{han2022equivariantgraphhierarchybasedneural}. The task is to predict atomic force vectors from protein conformations, making it a large-scale geometric regression problem where both local bonded interactions and longer-range spatial interactions are important. Following prior work, we evaluate on two variants of the AdK equilibrium molecular dynamics trajectory \citep{Seyler2017}: a backbone-level system with 855 atoms and an all-atom system with 3,341 atoms. The dataset contains 4,186 protein structures with trajectories. We report force mean squared error (MSE), consistent with the evaluation protocol used by the reference methods in Table~\ref{tab:proteinmd_results}.

\subsection{Training Details}

For ProteinMD, we use the linear convolutional variant of the Platonic Transformer, which is better suited to the long protein sequences in this benchmark than quadratic full attention. The model has 5 layers, hidden dimension 1152, and 72 heads. We use learned RoPE frequencies with $\texttt{rope\_sigma}=1$, no absolute positional encoding, and fixed keys. We do not apply additional rotation augmentation for this task.

The model is trained for 20 epochs with batch size 32 using AdamW with learning rate $5\times 10^{-4}$ and weight decay $10^{-8}$. The learning rate follows a cosine schedule with 5 warmup epochs. All ProteinMD experiments are run on a single GPU. The full set of hyperparameters is summarized in Table~\ref{tab:hyperparam}.

\section{Optimization Sensitivity of Learned Key Projections}
\label{app:learned_keys_ablation}

In Section~\ref{sec:dynamic_group_convolution} and Remark~\ref{cor:geometric_filters} of the main text, we describe the design trade-off between fixed key vectors ($k_j = 1$) and learned linear projections ($k_j = W^K f_j$). Fixed keys impose a purely geometric kernel, whereas learned keys increase expressivity by mixing geometry and content. In this section, we analyze the optimization sensitivity of the learned-key variant on QM9.

\subsection{Learned Keys Without Additional Normalization}

To investigate the impact of learned keys, we conducted a stress test on the QM9 dataset using the standard hyperparameters defined in Appendix~\ref{app:regression_qm9}, without the additional QK normalization used in our OMol25 setting. We compared the standard model (fixed keys) against a variant with learned key projections. We performed this comparison for both the full Attention mechanism and the linear Convolutional variant, training for 300 epochs across two random seeds.

The results are illustrated in Figure~\ref{fig:keys_ablation}. As shown in Figure~\ref{fig:keys_ablation}a, when using the full Attention mechanism, the introduction of learned keys (`use\_key=True`) makes training substantially more optimization-sensitive in this setting. Both runs utilizing learned keys exhibit divergence around epoch 10, with one run failing to complete. In contrast, the fixed key formulation (`use\_key=False`) trains smoothly.

In the linear Convolutional mode (Figure~\ref{fig:keys_ablation}b), training remains stable for both configurations. However, as shown in Figure~\ref{fig:keys_ablation}c, the learned keys provide no performance benefit in this QM9 setup; in fact, the model with fixed keys achieves a lower Test MAE. This suggests that for this smaller physical task, the robust geometric bias from fixed keys is preferable to the additional mixed content--geometry expressivity of learned keys.

\begin{figure}[h]
    \centering
    \begin{subfigure}[b]{0.32\textwidth}
        \centering
        \includegraphics[width=\textwidth]{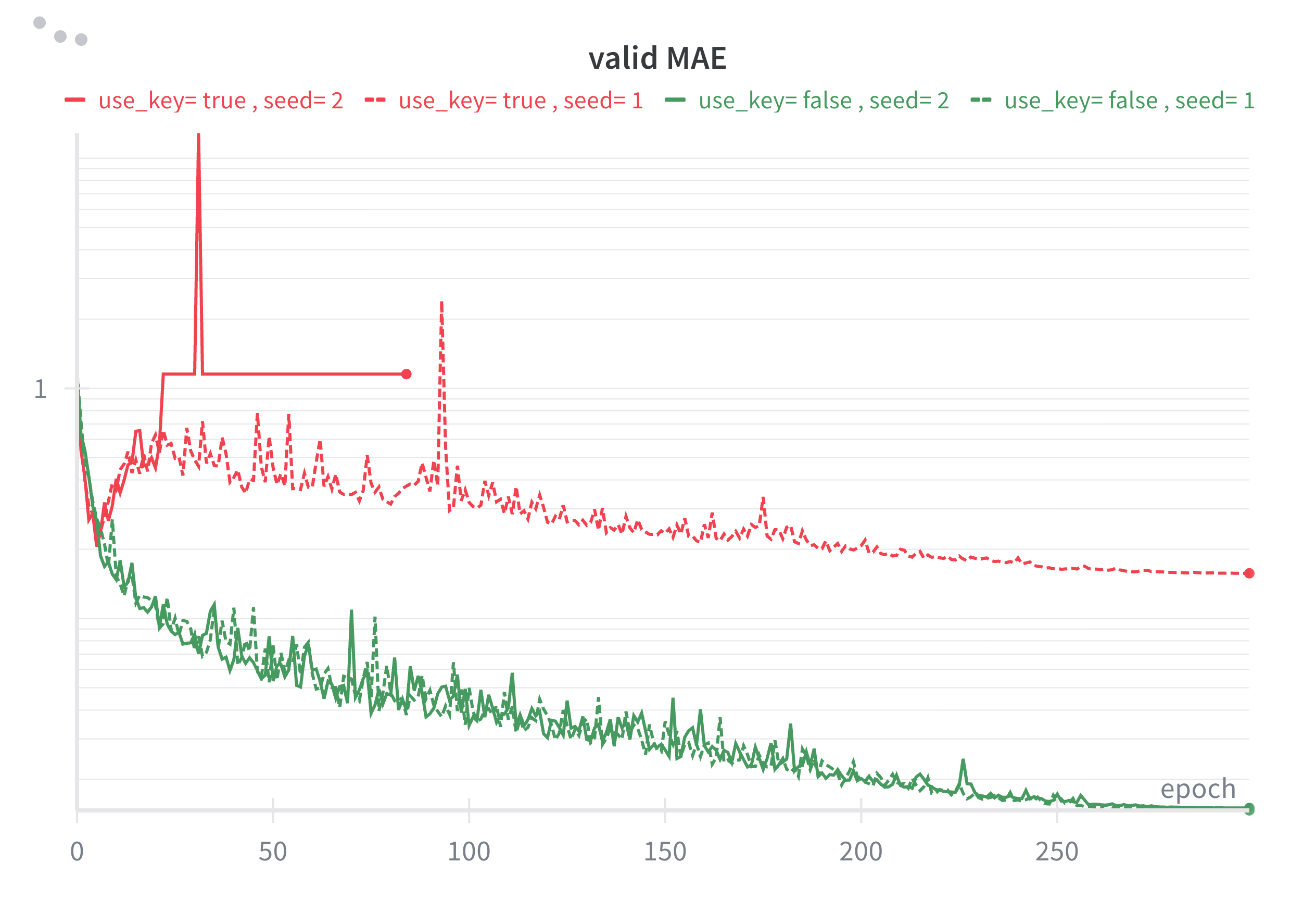} 
        \caption{Attention Mode: Learning Curves}
        \label{fig:keys_attn}
    \end{subfigure}
    \hfill
    \begin{subfigure}[b]{0.32\textwidth}
        \centering
        \includegraphics[width=\textwidth]{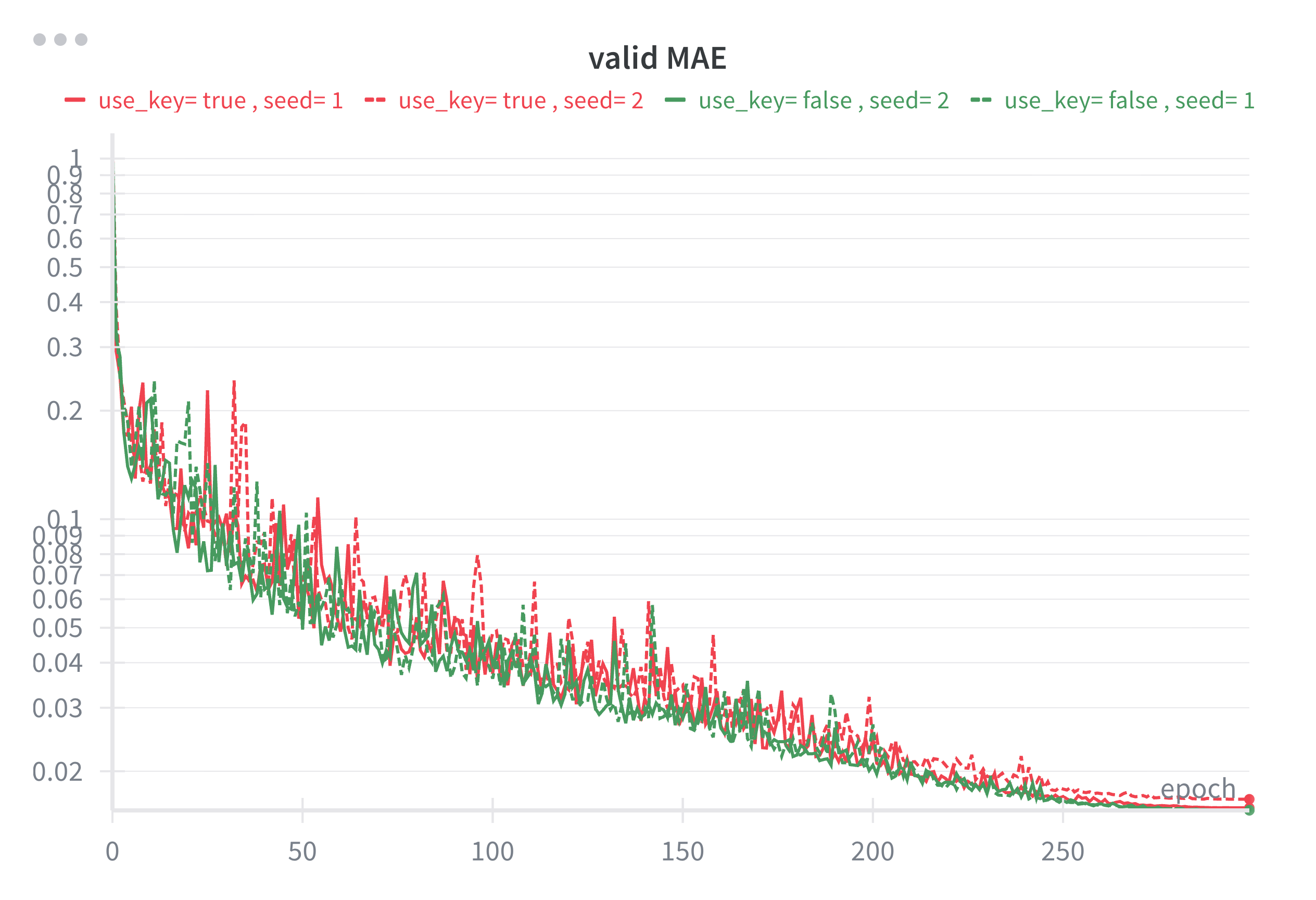} 
        \caption{Convolution Mode: Learning Curves}
        \label{fig:keys_conv}
    \end{subfigure}
    \hfill
    \begin{subfigure}[b]{0.32\textwidth}
        \centering
        \includegraphics[width=\textwidth]{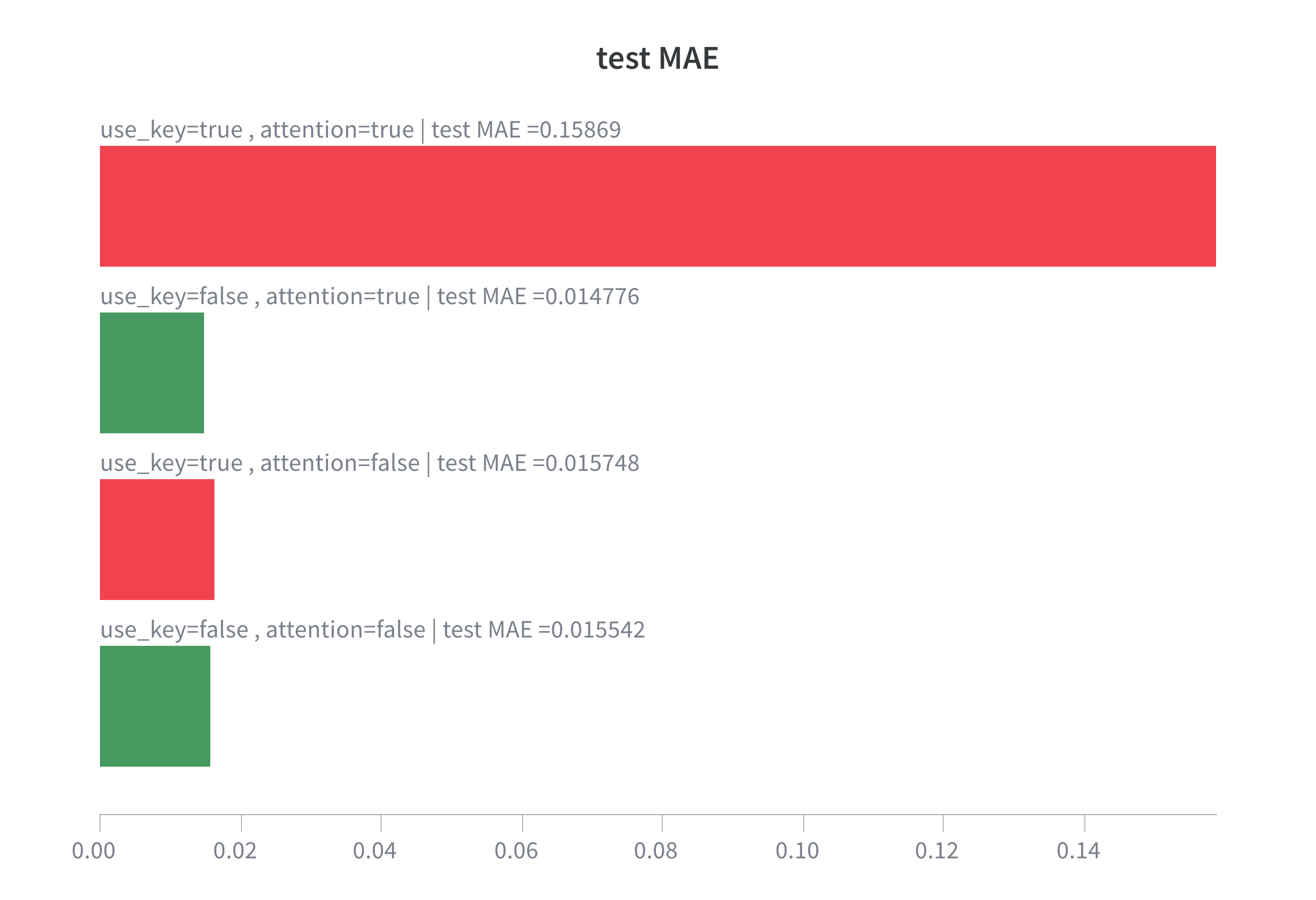} 
        \caption{Final Test MAE Comparison}
        \label{fig:keys_mae}
    \end{subfigure}
    \caption{{Impact of Learned Key Projections on Stability and Performance.} (a) In full attention without additional normalization, learned keys are more optimization-sensitive and diverge around epoch 10. (b) In convolutional mode, training is stable for both variants, but (c) fixed keys achieve better final accuracy in this QM9 setup.}
    \label{fig:keys_ablation}
\end{figure}

\subsection{Mitigating Sensitivity via Regularization and QK Normalization}

We further hypothesized that the optimization sensitivity in the Attention setting might be mitigated by stronger regularization. We performed a sweep of weight decay values ranging from $10^{-1}$ to $10^{-8}$ for the model with learned keys.

Figure~\ref{fig:wd_sweep} presents these results. Figure~\ref{fig:wd_sweep}a shows that while high weight decay values ($10^{-1}$ to $10^{-4}$) can stabilize the training, reducing the weight decay below $10^{-4}$ immediately reintroduces the sensitivity observed in the previous experiment. Figure~\ref{fig:wd_sweep}b shows that the best stable weight-decay setting still lags behind the default constant-key scenario with weight decay $10^{-8}$ (Figure~\ref{fig:keys_ablation}). This indicates that regularization alone is not the only possible mitigation: in our OMol25 experiments, where learned keys are beneficial, we pair them with QK normalization to directly control query/key magnitudes before computing attention scores.

\begin{figure}[h]
    \centering
    \begin{subfigure}[b]{0.48\textwidth}
        \centering
        \includegraphics[width=\textwidth]{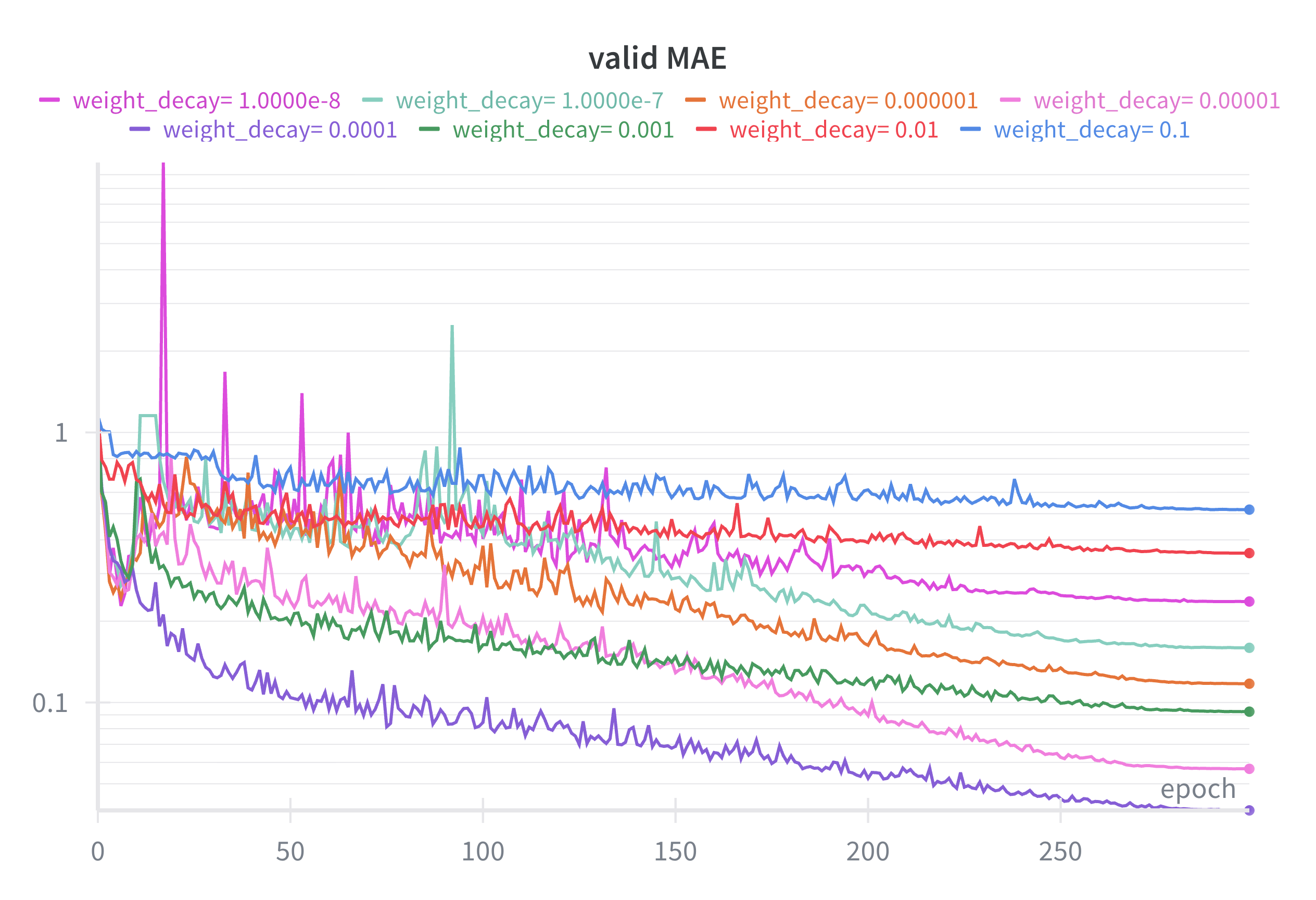} 
        \caption{Learning Curves across Weight Decay Sweep}
        \label{fig:wd_curves}
    \end{subfigure}
    \hfill
    \begin{subfigure}[b]{0.48\textwidth}
        \centering
        \includegraphics[width=\textwidth]{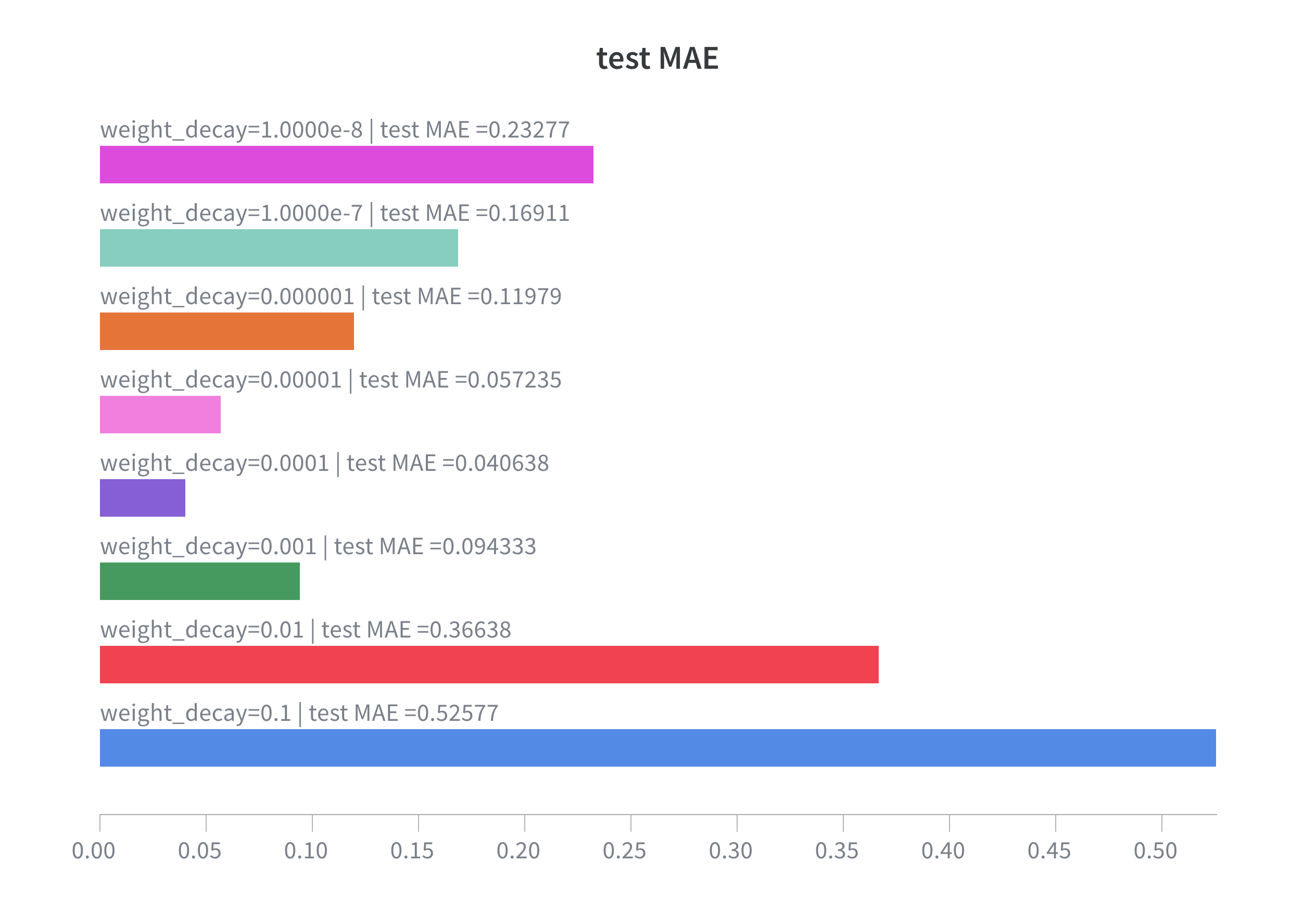} 
        \caption{Test MAE vs. Weight Decay}
        \label{fig:wd_mae}
    \end{subfigure}
    \caption{{Can Weight Decay Stabilize Learned Keys?} (a) Strong weight decay stabilizes training, while values $<10^{-4}$ lead to divergence. (b) The final test MAEs of each model.}
    \label{fig:wd_sweep}
\end{figure}

\textbf{Conclusion:} These experiments confirm that for QM9-like physical tasks, fixed keys ($k=1$) are a strong robust default rather than merely a simplification: they enforce a clean geometric kernel and train reliably under the standard hyperparameters. Learned keys remain a viable, more expressive choice when paired with stronger stabilization such as QK normalization, as used in our OMol25 experiments.

\section{Further Ablations and Analysis}
\label{app:further_ablations}
\rednote{
    
    \subsection{Equivariance Error}
    We report relative equivariance errors $|f(Rx) - f(x)|/(\frac{1}{2}(|f(x)| + |f(Rx)|))$ for Platonic transformers trained on the $\mu$ target of QM9 in Table~\ref{tab:equi_error}.
    The error is the median over samples of $x$ from the validation set and $R$ from $SO(3)$.
    All models are approximately equivariant after training, but the larger the group is the more equivariant the models are at initialization.

    \begin{table}[h]
        \centering
        \caption{Equivariance error on QM9-$\mu$.}
        \label{tab:equi_error}
        {%
        \begin{tabular}{lrr}
        \toprule
        \textbf{Group} &  At init & After training \\
        \midrule
        $\{\mathbf{e}\}$ & 0.21 & 0.0066 \\
        Tetrahedron & 0.061 & 0.0057 \\
        Octahedron & 0.028 & 0.0043 \\
        \bottomrule
        \end{tabular}}
    \end{table}

    \subsection{Equivariant versus Invariant Attention Scores}
    We perform an ablation on equivariant versus invariant attention scores as described in Section~\ref{sec:inv_equi_attn_score}.
    We train an octahedral Platonic Transformer on target $\mu$ of QM9.
    The model with equivariant attention scores obtains 0.01 MAE (as in Table~\ref{tab:qm9_results_extended}) while the one with invariant attention scores reaches obtains 0.02 MAE.

}

\subsection{Visualizations of Learned Attention Scores}

To show the directional attention learned in the attention head, we visualize examples over attention patterns in different frames $g\in\gG$ in Figure~\ref{fig:attention_viz}.


\begin{figure}[t]
    \centering
    \begin{subfigure}[b]{0.49\linewidth}
        \centering
        \includegraphics[width=\linewidth]{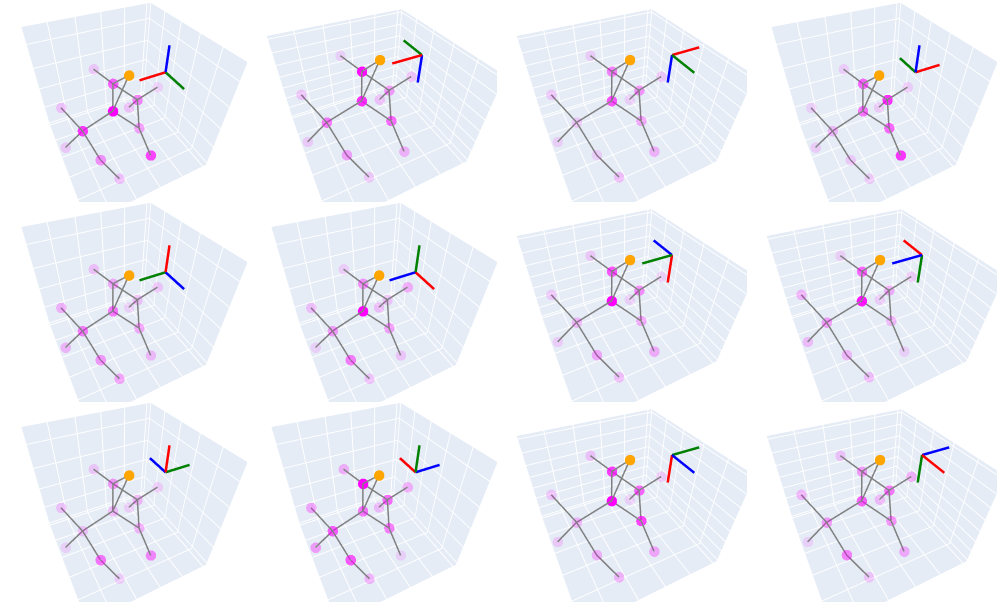}
        \caption{Original input}
        \label{fig:attention_viz_a}
    \end{subfigure}
    \vline
    \hfill
    \begin{subfigure}[b]{0.49\linewidth}
        \centering
        \includegraphics[width=\linewidth]{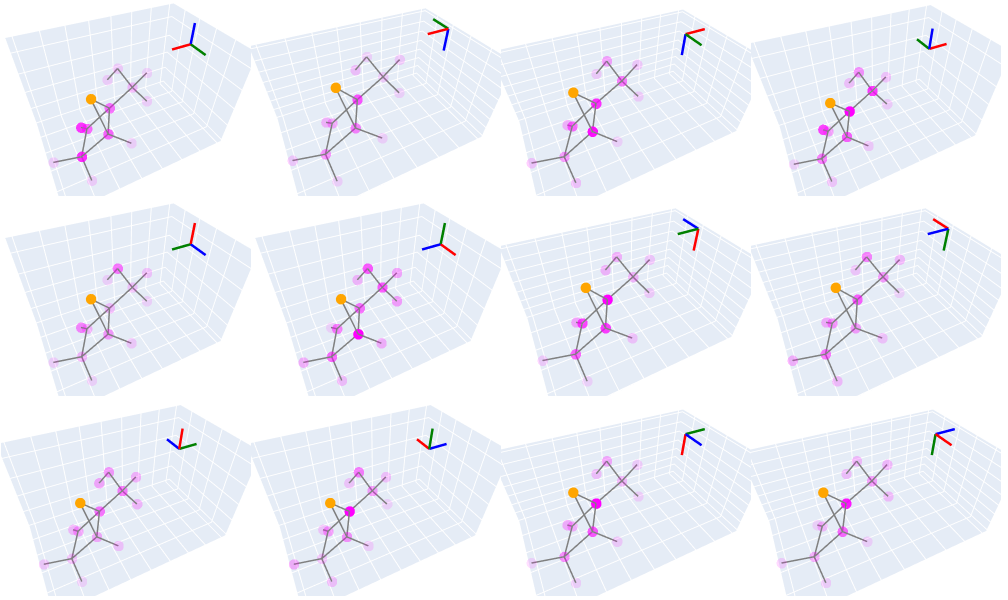}
        \caption{Input rotated 180 degrees about the up-direction}
        \label{fig:attention_viz_b}
    \end{subfigure}
    \caption{We visualize the attention score between the orange node and all others, where an increased color intensity indicates an increased attention score. The subplots correspond to 12 different frames in the same head of an octahedral Platonic Transformer layer (there are 12 more frames not visualized here). The attention is broadly focused on locality but with distinct directional biases. The equivariance of the model can be observed by comparing the attention scores in the sub-figures.
    For instance, the attention pattern in the top-left frame in Figure~\ref{fig:attention_viz_a} is the same as the one in the top-right frame in Figure~\ref{fig:attention_viz_b}, but rotated 180 degrees.
    }
    \label{fig:attention_viz}
\end{figure}

\section{Implementing Platonic Transformers in the Fourier Domain of Finite Groups}
\label{app:fourier_implementation}
With increasing hidden dimension (while not increasing sequence length), transformer blocks spend more and more of their total compute time in the pointwise linear layers.
To improve speed it can then be worthwhile to implement the pointwise equivariant linear layers in the Fourier domain of the rotation group, a technique that has recently been successfully employed in computer vision \citep{bokman2025flopping, nordstrom2025stronger}.
Considering the Fourier domain also sheds light on the connections between Platonic Transformers and equivariant networks with general steerable feature spaces~\citep{cesa2022program}.

In this section we demonstrate how a Fourier domain implementation can improve computational efficiency in Platonic Transformers.
In the Fourier domain, equivariant linear layers are block-diagonal, drastically reducing the required number of FLOPs for both forward and backward passes.
We will see that with the number of hidden dimensions considered in this paper, a naive PyTorch implementation is not efficient enough to realize the reduction in FLOPs in terms of a substantial reduction in training throughput, but at a moderately higher number of hidden dimensions, there are throughput gains.
This suggests that future scaling of Platonic Transformers will benefit from being implemented in the Fourier domain, and that more efficient implementations than our current one would be able to improve throughput even at smaller number of hidden dimensions.

We will use the tetrahedral symmetry group as a running example in this section.
The reader is cautioned that the representations discussed in this section are representations of the rotation group, in contrast to the representations of the translation group discussed in Appendix~\ref{app:rope_derivation}.

\subsection{Introduction to the Fourier Theory of Finite Groups}
The representation theory of finite groups is a well studied topic with many good text books. We recommend \citep{serre1977linear} for more detailed background than given here. Note that we consider vector spaces over the real numbers, which leads to a slightly more involved representation theory than complex numbers, see \citep[Section~II.12]{serre1977linear}.

Recall from Appendix~\ref{app:theory_background} that a representation of a group $\gG$ is a group homomorphism $\rho:\gG\to GL(V)$, where $V$ is a vector space.
We will here consider finite real vector spaces $V=\mathbb{R}^n$ so that $\rho(g)$ can be considered real-valued invertible matrices.
An irreducible representation is one where the matrices $\{\rho(g)\}_{g\in\gG}$ can not be simultaneously block-diagonalized.
Any finite group $\gG$ has a finite number (up to ismorphisms) of irreducible representations (irreps) $\{\rho_i\}$ and they can be computed given the multiplication table of the group. 
Irreps are important because we can decompose any finite representation $\rho$ into a direct sum of irreps by performing a change of basis, so statements about general representations often reduce to statements about irreps.

The features in Platonic Transformers are functions from $\gG$ to $\mathbb{R}^C$, that transform under the left regular representation as explained in Appendix~\ref{app:equivariance_properties}.
In order words, the representation that acts on them is a direct sum of $C$ copies of the regular representation of $\gG$.
Let this representation be denoted $\tilde \rho$.
Decomposing $\tilde \rho$ into irreps, we obtain
\begin{equation}
\label{eq:irrep_decomp}
    \tilde\rho(g) = Q \left(\bigoplus_{i}\rho_i(g)^{\oplus m_i}\right)Q^{-1}
\end{equation}
for some multiplicities $m_i$ of each irrep and a change of basis matrix $Q$ that can be taken to be orthogonal.

Now, Schur's lemma says that any equivariant linear map between non-isomorphic irreps $\rho_i\neq\rho_j$ must be constant zero.
Further, the space of equivariant linear maps between $\rho_i$ and itself is 1-, 2-, or 4-dimensional and isomorphic (as a division algebra over $\mathbb{R}$) to the real numbers, complex numbers, or quaternions depending on whether $\rho_i$ is of so-called real, complex or quaternion type.
(The type of $\rho_i$ can be computed.)
This means that any linear map that is equivariant from $\tilde\rho$ to $\tilde\rho$ is actually block diagonal after having performed the change of basis in \eqref{eq:irrep_decomp}, in particular so are the group convolutions used in Platonic Transformers.

For cyclic groups, the block-diagonalization corresponds to the fact that convolutions are pointwise multiplications in the Fourier domain\footnote{This requires working over the complex numbers, over the real numbers the pointwise multiplications turn into $2\times2$ matrix multiplications, again a block-diagonal structure.}.

\subsection{Fourier Theory of the Tetrahedral Group}
Let us now consider the Tetrahedral rotation group as $\gG$, consisting of the twelve rotational symmetries of a regular tetrahedron.
This group is isomorphic to the alternating group $A_4$ and has three real irreps.
The real irreps of the tetrahedral group are given by the one-dimensional trivial representation
\begin{equation}
    \rho_1(R) = 1,
\end{equation}
the three-dimensional standard representation
\begin{equation}
    \rho_3(R) = R
\end{equation}
and a two-dimensional representation $\rho_2$ that is defined as follows.
Note that any element in $\gG$ is either the identity, a rotation by $2\pi/3$ radians (there are 8 of these) or a rotation by $\pi$ radians (there are 3 of these).
For the identity and rotations by $\pi$,
\begin{equation}
    \rho_2(R) = \begin{pmatrix}
        1 & 0 \\ 0 & 1
    \end{pmatrix}.
\end{equation}
The rotations by $2\pi/3$ fall into two conjugacy classes of four elements each, where one conjugacy class contains the inverses of the second.
We can arbitrarily choose one of the conjugacy classes and define
\begin{equation}
    \rho_2(R) = \begin{pmatrix}
        \cos(2\pi/3) & -\sin(2\pi/3) \\
        \sin(2\pi/3) & \cos(2\pi/3) \\
    \end{pmatrix}
\end{equation}
there, which implicitly defines the values for the second conjugacy class to be the inverse of the above.

It can be computed that $\rho_1$ and $\rho_3$ are both of real type, while $\rho_2$ is of complex type.
Hence, equivariant linear maps from $\rho_1$ to $\rho_1$ are parameterized by one value, and the same for $\rho_3$.
Equivariant linear maps from $\rho_2$ to $\rho_2$ are instead parameterized by two values (this is because $\rho_2$ splits into two irreps over the complex numbers).

It can also be computed (or recovered from general facts of the Fourier transform over finite groups) that the representation $\tilde \rho$ acting on features with $C$ channels in a tetrahedral Platonic Transformer splits into $C$ copies of $\rho_1$, $C$ copies of $\rho_2$ and $3C$ copies of $\rho_3$ (as a sanity check, we recover all $C+C\cdot2+3C\cdot3=12C$ dimensions).

As mentioned, Schur's lemma now implies that equivariant linear maps from $\tilde \rho$ to itself are block-diagonal.
The map from copies of $\rho_1$ to copies of $\rho_1$ is parameterized by a $C\times C$ matrix,
the map from copies of $\rho_2$ to copies of $\rho_2$ is parameterized by two $C\times C$ matrices (because $\rho_2$ is of complex type) and the map from copies of $\rho_3$ to copies of $\rho_3$ is parameterized by a $3C\times 3C$ matrix.
Again, a sanity check gives that the full equivariant layer is then parameterized by $C^2 + 2C^2 + (3C)^2= 12C^2$ values, which is the same as the group convolution discussed in Section~\ref{sec:eq_layer_fixed_graph}.

We visualize the weight structure in Figure~\ref{fig:fourier_block_diag}.

\begin{figure}[t]
    \centering
    \begin{subfigure}[b]{0.48\textwidth}
        \centering
        \includegraphics[width=1\textwidth]{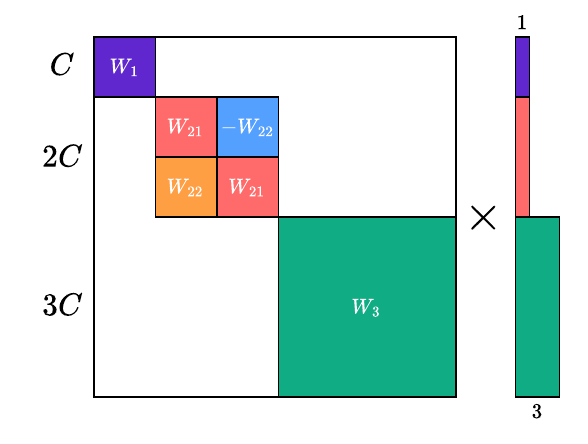}
        \caption{The block-diagonal structure of an equivariant weight matrix in the Fourier domain.
        }
        \label{fig:fourier_block_diag}
    \end{subfigure}
    \hfill
    \begin{subfigure}[b]{0.48\textwidth}
        \centering
        \includegraphics[width=1\textwidth]{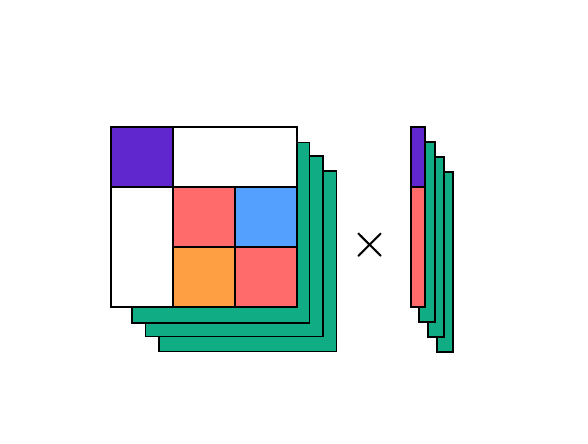}
        \caption{We can implement the linear layer as a batched matrix-vector multiplication with four batches.}
        \label{fig:fourier_batched}
    \end{subfigure}
    \caption{We visualize the weight matrices for linear layers that are equivariant under the tetrahedral rotation group, implemented in the Fourier domain. Each subfigure shows weights to the left and features to the right. Purple features transform according to $\rho_1$ (or technically $\rho_1\otimes I_C$ since there are $C$ copies of $\rho_1$), red features according to $\rho_2$ (by multiplication by $\rho_2(g)\otimes I_C$ from the left) and green features according to $\rho_3$ (by multiplication by $\rho_3(g)^\top$ from the right (if we flattened the green features, they would transform by $\rho_3(g)\otimes I_{3C}$ from the left)).
    The weight matrix is parameterized by the $C\times C$ matrices $W_1, W_{21}, W_{22}$ and the $3C\times 3C$ matrix $W_3$, yielding a total of $12C^2$ learnable parameters.
    The total number of multiplications to compute the linear layer implemented as a batched matrix-multiplication in \ref{fig:fourier_batched} is $4\cdot(3C)^2=36C^2$, yielding a $4\times$ FLOP reduction versus an ordinary layer from $12C$ to $12C$ dimensions ($144C^2$ multiplications).
        }
    \label{fig:fourier_implementation}
\end{figure}

\subsection{Implementation}
We implement a version of the Platonic Transformer with tetrahedral equivariance and all linear layers (i.e. in the MLP and projections in multi-head attention) in the Fourier domain.
We transform back to the spatial domain at each non-linearity and at the RoPE-attention layers and to the Fourier domain after these layers.
This transforming back-and-forth incurs an overhead that goes to zero as the hidden dimension increases (since it is just the $12\times12$ matrix $Q$ applied to each channel $C$),
however it is non-negligible at low--medium number of hidden dimensions, because it involves non-contiguous reshapes.

The maximum FLOP saving that can be obtained from changing a linear layer to be in the Fourier domain is going from $(12C)^2=144C^2$ operations to $C^2+(2C)^2+3\cdot(3C)^2=32C^2$, i.e. a saving of $4.5$ times.
However, in order to make the implementation more efficient in pure PyTorch, we opt to implement the mappings for $\rho_1$ and $\rho_2$ as one single $3C\times 3C$ matrix, enabling the whole linear layer to be implemented as a batched matrix multiplication with four $3C\times 3C$ weight matrices, as illustrated in Figure~\ref{fig:fourier_batched}.
This batched implementation uses $4\cdot(3C)^2=36C^2$ operations, yielding a maximum potential compute saving of $4$ times.

\subsection{Throughput Benchmarking}
We benchmark the training time per epoch on a subset of 20k molecules on the OMol25 task, using PyTorch's \texttt{torch.compile}.
These timing runs are on a single NVIDIA RTX6000 GPU.
We keep all hyperparameters constant as in the main experiments, except for varying the number of hidden dimensions.
The results are presented in Table~\ref{tab:fourier_implementation}.
It is clear that as we increase the number of hidden dimensions, a Fourier implementation starts paying off more and more.
Notably, since the standard spatial implementation is equal to non-equivariant Transformers in computational cost, the efficiency improvement of the Fourier implementation is a benefit of equivariant architectures over non-equivariant ones.
We emphasize that our Fourier implementation is not well-optimized, so further throughput improvements should be available.

\begin{table}[h]
    \centering
    \caption{Training times per epoch (seconds) on a subset of OMol25 with 20k examples. We compare a tetrahedral Platonic Transformer implemented in the spatial domain with one implemented in the Fourier domain.}
    \label{tab:fourier_implementation}
    \begin{tabular}{lrrrrrr} 
    \toprule & \multicolumn{6}{c}{\textbf{Hidden dimension}} \\ 
    \cmidrule(lr){2-7} \textbf{Implementation} & 576 & 864 & 1152 & 1440 & 1728 & 2016 \\ 
    \midrule Spatial (standard) & 18 & 23 & 29 & 40 & 49 & 63 \\ 
    Fourier & 19 & 22 & 27 & 32 & 38 & 45 \\ 
    \bottomrule 
    \end{tabular}
\end{table}

\end{document}